\newtheorem{theorem}{Theorem}
\newtheorem{proposition}{Proposition}
\newtheorem{remark}{Remark}
\newtheorem{lemma}{Lemma}
\newtheorem{definition}{Definition}
\newtheorem{corollary}{Corollary}
\def\eqref#1{equation~\ref{#1}}
\def\1{\bm{1}}
\def\vtheta{{\bm{\theta}}}
\def\vpsi{{\bm{\psi}}}
\def\vn{{\bm{n}}}
\def\vu{{\bm{u}}}
\def\vv{{\bm{v}}}
\def\vx{{\bm{x}}}
\def\vy{{\bm{y}}}
\def\mG{{\bm{G}}}
\def\mH{{\bm{H}}}
\def\mI{{\bm{I}}}
\def\mJ{{\bm{J}}}
\def\mM{{\bm{M}}}
\def\mP{{\bm{P}}}
\def\mS{{\bm{S}}}
\def\mW{{\bm{W}}}
\def\mPhi{{\bm{\Phi}}}
\def\mLambda{{\bm{\Lambda}}}
\def\mSigma{{\bm{\Sigma}}}
\DeclareMathAlphabet{\mathsfit}{\encodingdefault}{\sfdefault}{m}{sl}
\SetMathAlphabet{\mathsfit}{bold}{\encodingdefault}{\sfdefault}{bx}{n}
\def\sR{{\mathbb{R}}}
\DeclareMathOperator*{\argmax}{arg\,max}
\newcommand{\Tr}{\mathrm{Tr}}
\newcommand{\rank}{\mathrm{rank}}
\title{The Expected Loss of Preconditioned Langevin Dynamics\\Reveals the Hessian Rank}
\author{
    Amitay Bar\equalcontrib,
    Rotem Mulayoff\equalcontrib,
    Tomer Michaeli,
    Ronen Talmon
}
\begin{document}

\maketitle

\begin{abstract}
    Langevin dynamics (LD) is widely used for sampling from distributions and for optimization. In this work, we derive a closed-form expression for the expected loss of preconditioned LD near stationary points of the objective function. We use the fact that at the vicinity of such points, LD reduces to an Ornstein–Uhlenbeck process, which is amenable to convenient mathematical treatment. Our analysis reveals that when the preconditioning matrix satisfies a particular relation with respect to the noise covariance, LD's expected loss becomes proportional to the rank of the objective's Hessian. We illustrate the applicability of this result in the context of neural networks, where the Hessian rank has been shown to capture the complexity of the predictor function but is usually computationally hard to probe. Finally, we use our analysis to compare SGD-like and Adam-like preconditioners and identify the regimes under which each of them leads to a lower expected loss.
\end{abstract}

\section*{Introduction}
Langevin dynamics (LD) has proven to be a powerful tool across many domains. Its basic discretization, the unadjusted Langevin algorithm (ULA), along with other variants, such as the stochastic gradient Langevin dynamics (SGLD) method \citep{welling2011bayesian} and its extensions, are commonly used for sampling from distributions \cite{ding2014bayesian,wang2015privacy} and for nonconvex optimization \citep{gelfand1991recursive, raginsky2017non,xu2018global, chen2020stationary, borysenko2021coolmomentum}. LD and similar stochastic differential equations (SDEs) are also used for analyzing the optimization process of neural networks (NNs), as they serve as continuous-time analogues to popular optimizers, like SGD \cite{arora2018optimization, elkabetz2021continuous, latz2021analysis, zhu2019anisotropic}. 

An important variant of SGLD is the stochastic gradient Riemannian Langevin dynamics (SGRLD)  method \citep{patterson2013stochastic}, which is an LD-type random process on a Riemannian manifold that respects the Riemannian metric. \citet{girolami2011riemann} used Riemannian LD as an improved Markov chain Monte Carlo (MCMC) method for sampling from distributions. Here we view SGRLD as a preconditioned version of LD, where the Riemannian metric tensor plays the role of the preconditioner. This more general viewpoint is of practical value, as  preconditioning is commonly used for circumventing instabilities that stem from an ill-conditioned loss landscape, in optimization problems in general \cite{pock2011diagonal,dauphin2015equilibrated}, and in NN training in particular  \cite{kingma2014adam, Tieleman2012RMSProp}. Combining preconditioning and SGLD has been previously studied in the past \cite{li2016preconditioned,marceau2017natural}. For example, \citet{marceau2017natural} used a Fisher matrix approximation for choosing the variance of the noise in preconditioned SGLD.

In this paper, we consider a preconditioned LD and study its expected loss near stationary points on a large time scale. Specifically, we use a quadratic approximation of the loss about the stationary point, leading to an Ornstein–Uhlenbeck (OU) process, which is mathematically tractable. We show that when the preconditioner and the noise covariance satisfy a particular relation, the expected loss is linear in the Hessian rank. The Hessian of the loss of NNs has attracted a lot of interest in recent years, as it has been linked to model ``complexity'' \cite{arora2019implicit,li2020towards} and generalization \cite{huh2022low} and has been shown to exhibit interesting spectral properties \cite{papyan2020traces}. Yet, it is typically infeasible to compute the rank of the Hessian of a NN (or even just store the Hessian) due to the large number of parameters. Our theoretical result suggests that we can estimate the Hessian rank at minima by simply applying LD in its vicinity. Leveraging our result, we devise an iterative Hessian rank estimation algorithm, which does not require spectral decomposition of the Hessian. Remarkably, the preconditioning matrix and the covariance matrix revealing the Hessian rank lead to a \emph{Riemannian} LD with a Riemannian metric that is equal to the inverse of the preconditioner.

Additionally, we show that under certain conditions, the expected loss at a large time scale depends only on the interplay between the preconditioner and the noise covariance. This simple result enables us to analyze the impact of different preconditioners on the expected loss. Specifically, we examine two preconditioning matrices which correspond to stochastic gradient descent (SGD) \cite{robbins1951stochastic} and Adam \cite{kingma2014adam}. Next, we derive conditions on the preconditioning that lead to the maximal expected loss. From the standpoint of NNs with a non-convex objective function, higher loss indicates better escaping efficiency from local minima \cite{zhu2019anisotropic}. For completeness, we also consider initialization at a saddle point and show that another derivative of our analysis is the ability of the preconditioned LD to escape saddle points.

We empirically demonstrate our theory on linear and nonlinear NNs. We show that the expected loss of the networks incorporated into preconditioned LD with specific preconditioning results in an accurate estimation of the Hessian rank.

\section*{Preconditioned Langevin Dynamics}
We consider a preconditioned LD given by the SDE 
\begin{equation}\label{eq: SDE G gradient G noise}
    \smash{d\vtheta_t = -\mG\nabla f(\vtheta_t)dt +  \mG\mSigma^{\frac{1}{2}} d\vn_t,}
\end{equation}
where $\vtheta_t\in\sR^n$, $f(\cdot)$ is a scalar objective function and $\nabla f(\vtheta_t)$ is its gradient, $\mG\in\mS_{++}^n$ is a positive definite (PD) preconditioning matrix, $\mSigma$ is a noise covariance matrix, and $d\vn_t$ is a standard Brownian motion. The SDE in~(\ref{eq: SDE G gradient G noise}) stems from the Riemannian LD presented in \cite{girolami2011riemann, patterson2013stochastic}. Specifically, when $\mSigma=\mG^{-1}$, the SDE in (\ref{eq: SDE G gradient G noise}) coincides with LD on a flat Riemannian manifold, where $\mG^{-1}$ is the metric tensor matrix. Note that in (\ref{eq: SDE G gradient G noise}), we decouple the preconditioning matrix $\mG$ and the noise covariance $\mSigma$. This allows us to model a user-chosen preconditioning matrix $\mG$ multiplying gradient estimation noise with covariance $\mSigma$.

The Riemannian LD has been considered in many studies. For example, \citet{girolami2011riemann} used it for improving sequential MCMC algorithms, and \citet{li2016preconditioned} used it for efficient sampling from the posterior. Broadly, SDEs such as (\ref{eq: SDE G gradient G noise}) naturally model gradient flow schemes, which are often viewed as the continuous counterparts of gradient descent schemes. Therefore, such SDEs were recently used for analyzing the optimization process of NNs \cite{arora2018optimization, elkabetz2021continuous, latz2021analysis}. In that context, $\vtheta$ is the vector of parameters of the network, $f(\vtheta) = \mathcal{L}(\vtheta)$ is the loss function, $\mG$ is the user-chosen preconditioner, and $d\vn$ accounts for the gradient noise that arises due to the use of small batches. Alternatively, $d\vn$ can be synthetic noise added to the gradient for better learning and generalization \cite{neelakantan2015adding,kaiser2015neural,zeyer2017comprehensive}. 

We explore the effect of preconditioning and the Hessian rank on the expected loss of the preconditioned LD. We remark that selecting the preconditioner $\mG$ embodies the flexibility to select its magnitude and direction. Since the magnitude can be seen as the continuous counterpart of the adjustment of the discrete stepsize (learning rate), we maintain the norm of the preconditioner fixed, thereby allowing for the examination of the impact of the direction. 

To make the analysis tractable, we consider the second-order approximation of $f(\vtheta)$ around a stationary point, $\vtheta^*$, for which $\nabla f(\vtheta^*)=0$, namely, 
\begin{equation}\label{eq:NQM}
    f(\vtheta) \approx f(\vtheta^*) + \frac{1}{2}(\vtheta-\vtheta^*)^T\mH(\vtheta-\vtheta^*),
\end{equation}
where $\mH=\nabla^2f(\vtheta^*)\in\mS^n(\sR)$ is the Hessian of $ f $ at~$ \vtheta^* $. Without loss of generality, we assume that $\vtheta^*=\mathbf{0}$ and $f(\vtheta^*)=0$ \citep{chen2020stationary,zhu2019anisotropic}. Such noisy quadratic models (\ref{eq:NQM}) were considered in the past to model NN optimization, and it was shown that despite their simplicity, they capture important features of non-trivial NNs that are used in practice \cite{zhang2019algorithmic}. We note that the second-order approximation is made only for analysis and not in our experiments.

Under the noisy quadratic model at the vicinity of $\vtheta^*$, the SDE in (\ref{eq: SDE G gradient G noise}) becomes the following multivariate OU process \cite{gardiner1985handbook}, given by
\begin{equation}\label{eq: OU process for dx_t}
    d\vtheta_t =
    -\mG\mH\vtheta_tdt +
    \mG\mathbf{\mSigma}^{\frac{1}{2}} d\vn_t.
\end{equation}
An analysis of (\ref{eq: OU process for dx_t}) without preconditioning, \emph{i.e.}, for $\mG=\mI$, appears in \cite{zhu2019anisotropic} for a short time scale. In contrast, here, we specifically analyze the effect of the preconditioning over long-time scales.

We conclude this section by noting that the preconditioner in (\ref{eq: SDE G gradient G noise}) multiplies both the gradient and the noise. In case we have access to the accurate gradient, it was proposed to add noise to escape local minima and saddle points \cite{chen2020stationary,choi2023appropriate}. This leads to the following SDE, where $\mG$ multiplies only the gradient
\begin{equation}\label{eq: SDE G multiplied only the gradient}
    d\vtheta_t = -\mG\nabla \mathcal{L}(\vtheta_t)dt +  \mSigma^{\frac{1}{2}} d\vn_t.
\end{equation}
With only slight changes to our analysis, similar results could be obtained for the SDE in (\ref{eq: SDE G multiplied only the gradient}) as well.

\section*{Analyzing the Expected Loss}
In this section, we analyze the expected loss induced by the SDE in (\ref{eq: OU process for dx_t}) and present the effects of different preconditioning matrices. The proofs appear in the supplementary materials (SM) included in the arXiv version of the paper. 

We begin by examining the expected loss near a stationary point induced by the LD in (\ref{eq: OU process for dx_t}) for a loss function with an arbitrary Hessian matrix $\mH$, representing either a minimum point or a saddle point.
\begin{theorem}[\textbf{Expected loss over time}] \label{thm: E[f(x)] expression}
    The expected loss of a process governed by the SDE in (\ref{eq: OU process for dx_t}) is given by
    \begin{equation} \label{eq: E[f(x)] for minima}
        \mathbb{E}[f(\vtheta_t)] = 
        \frac{1}{4}\Tr\left(\mSigma \mG\left(\mI - e^{-2\mG\mH t}\right)\right).
    \end{equation}
\end{theorem}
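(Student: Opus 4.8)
The plan is to exploit that, under the quadratic model (\ref{eq:NQM}), the SDE (\ref{eq: OU process for dx_t}) is a \emph{linear} SDE with deterministic initial condition $\vtheta_0=\vtheta^{*}=\mathbf{0}$, so $\vtheta_t$ is a Gaussian process with zero mean and covariance $\mC_t=\Cov(\vtheta_t)$; consequently $\mathbb{E}[f(\vtheta_t)]=\tfrac12\mathbb{E}[\vtheta_t^{T}\mH\vtheta_t]=\tfrac12\Tr(\mH\mC_t)$, and the whole task reduces to computing $\mC_t$. First I would write down the solution $\vtheta_t=\int_0^t e^{-\mG\mH(t-s)}\mG\mSigma^{\frac12}\,d\vn_s$ and apply the It\^o isometry (or, equivalently, solve the Lyapunov ODE $\dot{\mC}_t=-\mG\mH\mC_t-\mC_t\mH\mG+\mG\mSigma\mG$ with $\mC_0=\mathbf{0}$, using $\mG^{T}=\mG$ and $(\mSigma^{\frac12})^{T}=\mSigma^{\frac12}$) to get
\[
\mC_t=\int_0^t e^{-\mG\mH s}\,\mG\mSigma\mG\,e^{-\mH\mG s}\,ds .
\]

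The core of the argument is then the matrix algebra needed to evaluate $\tfrac12\Tr(\mH\mC_t)$, and this is the step I expect to be delicate because $\mH$, $\mG$ and $\mSigma$ do not mutually commute. The device I would use is the elementary identity $\mH(\mG\mH)^{k}=(\mH\mG)^{k}\mH$ for all $k\ge0$, which extends to $\mH\,g(\mG\mH)=g(\mH\mG)\,\mH$ for any analytic $g$, in particular $\mH e^{-\mG\mH s}=e^{-\mH\mG s}\mH$. (Equivalently, one may conjugate by $\mG^{\frac12}$: the process $\vu_t=\mG^{-\frac12}\vtheta_t$ solves an OU equation with the \emph{symmetric} drift $-\widetilde{\mH}$, $\widetilde{\mH}:=\mG^{\frac12}\mH\mG^{\frac12}$, and diffusion covariance $\widetilde{\mSigma}:=\mG^{\frac12}\mSigma\mG^{\frac12}$, so that after diagonalizing $\widetilde{\mH}$ every integral becomes a sum of scalar integrals; this is the cleanest bookkeeping.) Inside the trace, moving the trailing exponential to the front cyclically and invoking the identity collapses the two exponentials: $\Tr\!\big(\mH e^{-\mG\mH s}\,\mG\mSigma\mG\,e^{-\mH\mG s}\big)=\Tr\!\big(\mH e^{-2\mG\mH s}\,\mG\mSigma\mG\big)$.

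Finally I would carry out the $s$-integral using $\tfrac{d}{ds}\big(-\tfrac12\mG^{-1}e^{-2\mG\mH s}\big)=\mH e^{-2\mG\mH s}$, so that $\int_0^t\mH e^{-2\mG\mH s}\,ds=\tfrac12\mG^{-1}\big(\mI-e^{-2\mG\mH t}\big)$, substitute back, cancel $\mG^{-1}\mG$, and rearrange the trace cyclically to reach (\ref{eq: E[f(x)] for minima}); the prefactor $\tfrac14$ appears as $\tfrac12$ (from $f$) times $\tfrac12$ (from the integral). The only genuine obstacle is tracking the order of $\mH$, $\mG$, $\mSigma$ through the matrix exponentials and the trace; Gaussianity, the OU covariance formula, and the terminal integral are all routine, and the $\mG^{1/2}$-conjugation reduces the computation to the scalar case along each eigenvector of $\widetilde{\mH}=\mG^{1/2}\mH\mG^{1/2}$, which also makes transparent that the time dependence enters only through the spectrum of $\widetilde{\mH}$ — the fact exploited in the sequel to read off $\rank\mH$.
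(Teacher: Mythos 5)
Your proposal follows essentially the same route as the paper's proof: the explicit OU solution with It\^o isometry for the covariance, the commutation identity $\mH e^{-\mG\mH s}=e^{-\mH\mG s}\mH$ (which the paper packages as a similarity lemma) to collapse the two exponentials inside the trace, and the same closed-form integration yielding the $\tfrac{1}{4}$ prefactor. One caution, shared with the paper's own last line: the cyclic rearrangement actually lands on $\frac{1}{4}\Tr\left(\mG\mSigma\left(\mI-e^{-2\mG\mH t}\right)\right)$, and swapping this to $\mSigma\mG$ in front of the non-symmetric factor $e^{-2\mG\mH t}$ is not a free move unless $\mG$ and $\mSigma$ commute (the discrepancy sits only in the transient term and disappears in the $t\rightarrow\infty$ limit used downstream).
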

The expected loss has been previously investigated considering $\mG=\mI$. Indeed, when setting $\mG=\mI$ in (\ref{eq: E[f(x)] for minima}), the result coincides with the result presented in \cite{zhu2019anisotropic,chen2020stationary}. Considering a general preconditioner $\mG$ requires a more involved derivation since the symmetry of the matrices breaks. Our key observation is that this can be circumvented by using matrix similarity. See more details in the SM.

Next, we consider a minimum point with a positive semi-definite (PSD) Hessian, namely, the Hessian eigenvalues are larger than or equal to zero. Taking the limit of $t\rightarrow \infty$ in~(\ref{eq: E[f(x)] for minima}) leads to our main result.
\pagebreak
\begin{proposition}[\textbf{Expected loss for large $t$}]\label{thm: E[f(x)] expression for large t}
    When the Hessian is PSD, %
    we have:
    \begin{equation}\label{eq: E[f(x)] for large t}
        \lim_{t\rightarrow\infty}\mathbb{E}[f(\vtheta_t)]
        = \frac{1}{4}\Tr\left(\mSigma \mG \mP\mJ\mP^{-1}\right),
    \end{equation}
    where $\mJ$ is the following diagonal matrix 
    \begin{equation}
        \mJ_{ii} = 
        \begin{cases}
          1 & \lambda_i\{\mG\mH\} > 0\\
          0 & \lambda_i\{\mG\mH\} = 0 
        \end{cases},\quad i=1,\ldots,n,
    \end{equation}
    and $\mP$ is a matrix whose columns are the eigenvectors of $\mG\mH$, ordered according to the eigenvalues. 
\end{proposition}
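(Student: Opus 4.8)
The plan is to invoke Theorem~\ref{thm: E[f(x)] expression} and evaluate the $t\to\infty$ limit of the matrix $\mI - e^{-2\mG\mH t}$ inside the trace in (\ref{eq: E[f(x)] for minima}). The one genuine subtlety is that $\mG\mH$ is not symmetric, so it cannot be diagonalized in an orthonormal basis; the first task is therefore to establish that $\mG\mH$ is nonetheless diagonalizable with real, \emph{nonnegative} eigenvalues, which is exactly where the PSD hypothesis on $\mH$ enters.

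First I would record the structural fact that, since $\mG\in\mS_{++}^n$, the square root $\mG^{1/2}$ exists and is PD, and
\[
\mG\mH \;=\; \mG^{1/2}\bigl(\mG^{1/2}\mH\mG^{1/2}\bigr)\mG^{-1/2},
\]
so $\mG\mH$ is similar to the symmetric matrix $\mA := \mG^{1/2}\mH\mG^{1/2}$. Because $\mH\succeq 0$, $\mA$ is symmetric PSD, hence $\mA = \mQ\mLambda\mQ^T$ for an orthogonal $\mQ$ and a diagonal $\mLambda = \mathrm{diag}(\lambda_1,\dots,\lambda_n)$ with $\lambda_i\ge 0$. Consequently $\mG\mH = \mP\mLambda\mP^{-1}$ with $\mP := \mG^{1/2}\mQ$, which shows that $\mG\mH$ is diagonalizable, that its eigenvalues are precisely the $\lambda_i\ge 0$, and that $\mP$ is a valid matrix of eigenvectors ordered by eigenvalue (in general non-orthogonal, which is why the statement features $\mP^{-1}$ rather than $\mP^T$).

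Next I would use this eigendecomposition to compute the matrix exponential, $e^{-2\mG\mH t} = \mP\,e^{-2\mLambda t}\,\mP^{-1}$, so that $\mI - e^{-2\mG\mH t} = \mP\bigl(\mI - e^{-2\mLambda t}\bigr)\mP^{-1}$. The diagonal matrix $\mI - e^{-2\mLambda t}$ has $i$-th entry $1 - e^{-2\lambda_i t}$, which tends to $1$ when $\lambda_i>0$ and equals $0$ for all $t$ when $\lambda_i=0$; hence it converges entrywise to $\mJ$. Therefore $\lim_{t\to\infty}\bigl(\mI - e^{-2\mG\mH t}\bigr) = \mP\mJ\mP^{-1}$, and since the trace is a continuous linear functional on $\sR^{n\times n}$ the limit passes through $\Tr(\mSigma\mG\,\cdot\,)$; plugging into (\ref{eq: E[f(x)] for minima}) gives $\lim_{t\to\infty}\mathbb{E}[f(\vtheta_t)] = \tfrac14\Tr(\mSigma\mG\mP\mJ\mP^{-1})$, as claimed.

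The main obstacle — and essentially the only non-routine point — is the diagonalizability step: for a general (non-PSD) $\mH$, $\mG\mH$ could fail to be diagonalizable or could have complex or negative eigenvalues, in which case $e^{-2\mG\mH t}$ need not converge at all. The PSD assumption is precisely what rules this out, via the congruence $\mA = \mG^{1/2}\mH\mG^{1/2}\succeq 0$. I would also remark that it is exactly the zero-eigenvalue directions, where the exponential stays at $1$ rather than decaying, that prevent the naive answer $\tfrac14\Tr(\mSigma\mG)$ and force the projector $\mP\mJ\mP^{-1}$ onto the span of the positive modes.
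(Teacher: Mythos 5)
Your proposal is correct and follows essentially the same route as the paper's proof: similarity of $\mG\mH$ to the symmetric PSD matrix $\mG^{\frac{1}{2}}\mH\mG^{\frac{1}{2}}$, eigendecomposition $\mG\mH=\mP\mLambda\mP^{-1}$ with $\mLambda_{ii}\ge 0$, the entrywise limit of $e^{-2\mLambda t}$ yielding $\mJ$, and substitution into Theorem~\ref{thm: E[f(x)] expression}. Your explicit construction $\mP=\mG^{1/2}\mQ$ is a slightly more careful justification of diagonalizability than the paper gives, but it is not a different argument.
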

Note that $\mG\mH$ is similar to the symmetric matrix $\mG^{\frac{1}{2}}\mH\mG^{\frac{1}{2}}$, and therefore, it has a real spectrum.

Proposition \ref{thm: E[f(x)] expression for large t} provides means to prove the interplay between the expected loss, the Hessian matrix $\mH$, the covariance of the noise $\mSigma$, and the (user-chosen) preconditioner $\mG$. The limit of $t\rightarrow\infty$ in (\ref{eq: E[f(x)] for large t}) is required so that $e^{-2\lambda_{\min^+}\{\mG\mH\}t} \ll 1$, where $\lambda_{\min^+}\{\mG\mH\}$ is the smallest non-zero eigenvalue of $\mG\mH$. This holds for sufficiently large $t$ that satisfies $t\gg\ 1/\lambda_{\min^+}\{\mG\mH\}$.

We demonstrate the theoretical results in Figure \ref{fig: Loss vs iterations linear Nets}, which presents the loss of a linear NN. Since the Hessian of such a network has an explicit expression \cite{mulayoff2020unique}, we are able to present the theoretical expression of the expected loss over time according to Theorem \ref{thm: E[f(x)] expression}, and the theoretical steady-state value according to Proposition \ref{thm: E[f(x)] expression for large t}. See details in the Application to Hessian Rank Estimation section.

\begin{figure}
    \centering
    \includegraphics[width=0.8\columnwidth]{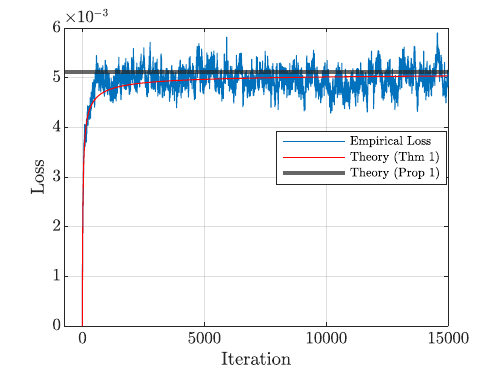} %
    \caption{The loss (in blue) for a linear NN of depth $5$ with input and output dimensions of $32$ (the number of parameters is $5120$). The theoretical expressions according to Theorem \ref{thm: E[f(x)] expression} and Proposition \ref{thm: E[f(x)] expression for large t} are in red and black, respectively.}
    \label{fig: Loss vs iterations linear Nets}
\end{figure}

When the Hessian is strictly positive (\emph{i.e.}, a PD matrix), the following result stems directly from Proposition~\ref{thm: E[f(x)] expression for large t}. 
\begin{corollary}[\textbf{Expected loss for PD Hessian}]   
\label{prop: E[f(X)] --> tr(Sigma G)}
If the Hessian is PD, then
\begin{equation} \label{eq: E[f(x)] for minima for large t}
    \lim_{t\rightarrow \infty}\mathbb{E}[f(\vtheta_t)]
    = \frac{1}{4}\Tr\left(\mSigma \mG\right).
\end{equation}
\end{corollary}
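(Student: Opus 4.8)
The plan is to obtain this as an immediate specialization of Proposition~\ref{thm: E[f(x)] expression for large t}. The only thing that actually needs checking is that, when $\mH$ is PD, every eigenvalue of $\mG\mH$ is strictly positive, so that the diagonal matrix $\mJ$ appearing in~(\ref{eq: E[f(x)] for large t}) equals the identity.

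First I would recall the similarity observation already noted right after Proposition~\ref{thm: E[f(x)] expression for large t}: since $\mG$ is PD it has a symmetric PD square root $\mG^{\frac{1}{2}}$, and $\mG\mH = \mG^{\frac{1}{2}}\big(\mG^{\frac{1}{2}}\mH\mG^{\frac{1}{2}}\big)\mG^{-\frac{1}{2}}$, so $\mG\mH$ is similar to the symmetric matrix $\mG^{\frac{1}{2}}\mH\mG^{\frac{1}{2}}$ and hence shares its (real) spectrum. When $\mH$ is PD, the matrix $\mG^{\frac{1}{2}}\mH\mG^{\frac{1}{2}}$ is a congruence transform of a PD matrix by the invertible matrix $\mG^{\frac{1}{2}}$, hence itself PD; concretely, for any $\vx\neq\vzero$ one has $\vx^T\mG^{\frac{1}{2}}\mH\mG^{\frac{1}{2}}\vx = (\mG^{\frac{1}{2}}\vx)^T\mH(\mG^{\frac{1}{2}}\vx) > 0$ because $\mG^{\frac{1}{2}}\vx\neq\vzero$. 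Therefore $\lambda_i\{\mG\mH\}>0$ for all $i=1,\ldots,n$.

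Consequently $\mJ_{ii}=1$ for every $i$, i.e.\ $\mJ=\mI$, and the expression in~(\ref{eq: E[f(x)] for large t}) collapses: $\mP\mJ\mP^{-1}=\mP\mP^{-1}=\mI$, so $\lim_{t\to\infty}\mathbb{E}[f(\vtheta_t)] = \tfrac14\Tr(\mSigma\mG\mP\mP^{-1}) = \tfrac14\Tr(\mSigma\mG)$, which is the claim.

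Since every step is elementary, there is no genuine obstacle; the one subtlety worth flagging is that $\mG\mH$ is generally not symmetric, so one must go through the similarity to $\mG^{\frac{1}{2}}\mH\mG^{\frac{1}{2}}$ to conclude the spectrum is real and positive, rather than appealing to ``a product of PD matrices is PD'', which is false in general. As an alternative route, one could bypass Proposition~\ref{thm: E[f(x)] expression for large t} and take $t\to\infty$ directly in Theorem~\ref{thm: E[f(x)] expression}: with $\mG\mH$ having strictly positive spectrum, $e^{-2\mG\mH t}\to\mathbf 0$ (using the same similarity, $e^{-2\mG\mH t} = \mG^{\frac{1}{2}} e^{-2\mG^{\frac{1}{2}}\mH\mG^{\frac{1}{2}} t}\mG^{-\frac{1}{2}}$, and submultiplicativity of an operator norm), whence $\Tr\big(\mSigma\mG(\mI - e^{-2\mG\mH t})\big)\to\Tr(\mSigma\mG)$.
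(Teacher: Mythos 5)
Your proposal is correct and follows essentially the same route as the paper, which presents this corollary as an immediate consequence of Proposition~\ref{thm: E[f(x)] expression for large t}: positive definiteness of $\mH$ forces all eigenvalues of $\mG\mH$ (via similarity to $\mG^{\frac{1}{2}}\mH\mG^{\frac{1}{2}}$) to be strictly positive, so $\mJ=\mI$ and the trace collapses to $\frac{1}{4}\Tr(\mSigma\mG)$. Your flag about not invoking ``a product of PD matrices is PD'' is a sensible precaution, and your alternative direct limit from Theorem~\ref{thm: E[f(x)] expression} is also valid, but neither changes the substance of the argument.
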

According to Corollary \ref{prop: E[f(X)] --> tr(Sigma G)}, when $t\rightarrow\infty$, the expected loss reaches a fixed value depending only on the interplay between the preconditioner $\mG$ and the noise covariance $\mSigma$.

From (\ref{eq: E[f(x)] for minima for large t}), we see that there exist infinitely many preconditioners that result in the same expected loss because the trace operation imposes only $n$ constraints on an $n\times n$ matrix. In particular, any expected loss can be achieved by a diagonal preconditioning matrix. This is important in the context of deep NN, when the number of parameters is typically very large, and adaptive methods resort to diagonal preconditioning matrices (for example, Adam \cite{kingma2014adam}).

\subsection*{The Preconditioner Revealing Hessian Rank}
An important consequence of Proposition \ref{thm: E[f(x)] expression for large t} is that the expected loss reveals the rank of the Hessian for a particular choice of the preconditioner. Specifically, by setting $\mG\mSigma = \sigma^2\mI$ in Proposition \ref{thm: E[f(x)] expression for large t}, we get that the expected loss tends to $ \sigma^2\Tr(\mJ) /4 $. Note that $ \Tr(\mJ) $ is the rank of $ \mG \mH $. Now, since $ \mG $ has full rank, we have that $ \rank(\mG \mH) = \rank(\mH) $, which leads us to the following result.
\begin{corollary}[\textbf{Expected loss and Hessian rank}]\label{thm: E[f(x)] --> rank(H)}
If $\mSigma\mG = \sigma^2\mI$, then 
\begin{equation}\label{eq: rank(H) = Ef(x)}
    \mathrm{rank}(\mH)
    = \lim_{t\rightarrow \infty}\frac{4}{\sigma^2}\mathbb{E}[f(\vtheta_t)].
\end{equation}
\end{corollary}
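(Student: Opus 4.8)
The plan is to obtain the identity directly from Proposition~\ref{thm: E[f(x)] expression for large t} by substituting the hypothesis $\mSigma\mG = \sigma^2\mI$ into (\ref{eq: E[f(x)] for large t}) and then identifying $\Tr(\mJ)$ with $\rank(\mH)$. First I would note that the hypothesis is consistent with the standing assumptions: since $\mG\in\mS_{++}^n$, the matrix $\mSigma = \sigma^2\mG^{-1}$ is symmetric positive definite, hence a legitimate noise covariance. Since we are at a minimum, $\mH$ is PSD, so Proposition~\ref{thm: E[f(x)] expression for large t} applies and gives $\lim_{t\to\infty}\mathbb{E}[f(\vtheta_t)] = \tfrac14\Tr(\mSigma\mG\,\mP\mJ\mP^{-1})$.

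The next step is a short trace manipulation. Substituting $\mSigma\mG = \sigma^2\mI$ turns the right-hand side into $\tfrac{\sigma^2}{4}\Tr(\mP\mJ\mP^{-1})$, and by the cyclic invariance of the trace this equals $\tfrac{\sigma^2}{4}\Tr(\mJ)$. It then remains to show $\Tr(\mJ) = \rank(\mH)$. By the definition of $\mJ$, $\Tr(\mJ)$ is the number of nonzero eigenvalues of $\mG\mH$, each counted once in the diagonal of $\mJ$; the one point that requires care is that this count equals $\rank(\mG\mH)$. This is where I would invoke the observation (already recorded right after Proposition~\ref{thm: E[f(x)] expression for large t}) that $\mG\mH$ is similar to the symmetric matrix $\mG^{1/2}\mH\mG^{1/2}$, hence diagonalizable; consequently the multiplicity with which the eigenvalue $0$ appears along the diagonal of $\mJ$ matches its algebraic multiplicity in $\mG\mH$, so $\Tr(\mJ) = \rank(\mG\mH)$. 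Finally, since $\mG$ is invertible, $\rank(\mG\mH) = \rank(\mH)$, which yields $\lim_{t\to\infty}\mathbb{E}[f(\vtheta_t)] = \tfrac{\sigma^2}{4}\rank(\mH)$, and rearranging gives (\ref{eq: rank(H) = Ef(x)}).

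The argument is essentially bookkeeping once Proposition~\ref{thm: E[f(x)] expression for large t} is available; no limits or estimates remain to be done, since the $t\to\infty$ limit has already been carried out there. The only place where one must be slightly careful is the identification $\Tr(\mJ) = \rank(\mG\mH)$: it uses diagonalizability of $\mG\mH$, not symmetry, and would fail for a general non-normal matrix with a nontrivial Jordan block at eigenvalue $0$. Everything else (consistency of $\mSigma = \sigma^2\mG^{-1}$ as a covariance, cyclicity of the trace, rank invariance under multiplication by an invertible matrix) is routine.
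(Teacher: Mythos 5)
Your proof is correct and follows essentially the same route as the paper, which also derives the corollary by substituting $\mSigma\mG=\sigma^2\mI$ into Proposition~\ref{thm: E[f(x)] expression for large t}, identifying $\Tr(\mJ)$ with $\rank(\mG\mH)$, and using invertibility of $\mG$ to conclude $\rank(\mG\mH)=\rank(\mH)$. Your extra remark that the step $\Tr(\mJ)=\rank(\mG\mH)$ relies on diagonalizability of $\mG\mH$ (via its similarity to $\mG^{1/2}\mH\mG^{1/2}$) is a point the paper leaves implicit, and it is a worthwhile clarification.
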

The importance of Corollary \ref{thm: E[f(x)] --> rank(H)} is that it links the Hessian rank, a meaningful quantity that is typically hard to estimate, with the expected loss, which can be computed. In other words, Corollary \ref{thm: E[f(x)] --> rank(H)} prescribes a way to estimate the Hessian rank by observing the expected loss.  
\begin{remark}
    Setting $\mSigma\mG = \sigma^2\mI$ in the SDE in (\ref{eq: SDE G gradient G noise}) leads to the following Riemannian LD on a flat manifold \cite{girolami2011riemann}
    \begin{equation}
        d\vtheta_t = -\mG\nabla f(\vtheta_t)dt +  \sigma\sqrt{\mG}d\vn_t,
    \end{equation}
    where $\mG^{-1}$ is the Riemannian metric.
\end{remark}
In other words, we see that the specific relationship between the gradient and the noise that describes a ``natural'' process on a manifold reveals the rank. Since the Hessian rank could potentially be large, in practice, we keep the quadratic approximation valid by setting $\sigma^2\sim\frac{1}{d}$. We note that the preconditioner $\mG$ is important in circumventing instabilities. In the Application to Hessian Rank Estimation section, we describe in detail the algorithm for Hessian rank estimation that is based on Corollary \ref{thm: E[f(x)] --> rank(H)}.

Next, we examine the effect of the Hessian rank on the expected loss, while considering arbitrary preconditioners. Here we fix $\mG$ along with $\mSigma$ and consider two different Hessian matrices with different ranks.
\begin{proposition}
\label{prop: rank H and loss}
    For the same preconditioner $\mG$ and noise covariance matrix $\mSigma$, if 
    \begin{equation}
        \rank(\mH_1) \le \rank(\mH_2),   
    \end{equation}    
    then    
    \begin{equation}
        \lim_{t\rightarrow \infty}\mathbb{E}[f_1(\vtheta_t)]
        \leq \lim_{t\rightarrow \infty}\mathbb{E}[f_2(\vtheta_t)]. 
    \end{equation}
\end{proposition}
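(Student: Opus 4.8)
The plan is to reduce the statement to a linear-algebraic monotonicity fact: that the trace of an orthogonal projector against a fixed positive semidefinite (PSD) matrix is monotone in the rank of the projector. First I would put the limiting loss in a symmetric form. Substituting $\vpsi_t=\mG^{-\frac12}\vtheta_t$ turns the OU process in~(\ref{eq: OU process for dx_t}) into $d\vpsi_t=-\mK\vpsi_t\,dt+\mG^{\frac12}\mSigma^{\frac12}\,d\vn_t$, with the \emph{symmetric} PSD drift $\mK:=\mG^{\frac12}\mH\mG^{\frac12}$ (which is similar to $\mG\mH$, hence has the same spectrum), and the loss becomes $f(\vtheta_t)=\tfrac12\vpsi_t^{T}\mK\vpsi_t$. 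Diagonalizing $\mK$ decouples the coordinates: the modes with $\lambda_i\{\mK\}=0$ contribute nothing to the loss, and the modes with $\lambda_i\{\mK\}>0$ reach a steady state, so that (consistently with Proposition~\ref{thm: E[f(x)] expression for large t})
\begin{equation*}
    \lim_{t\to\infty}\mathbb{E}[f(\vtheta_t)]=\frac14\Tr\!\left(\bm{\Pi}\,\mC\right),
\end{equation*}
where $\mC:=\mG^{\frac12}\mSigma\mG^{\frac12}\succeq 0$ and $\bm{\Pi}$ is the orthogonal projector onto $\mathrm{range}(\mK)$. Since $\mG$ is invertible, $\rank(\bm{\Pi})=\rank(\mK)=\rank(\mH)$.

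Applying this to $\mH_1$ and $\mH_2$ gives $\lim_{t\to\infty}\mathbb{E}[f_k(\vtheta_t)]=\tfrac14\Tr(\bm{\Pi}_k\mC)$ with the \emph{same} PSD matrix $\mC$ and $\rank(\bm{\Pi}_k)=\rank(\mH_k)$, so the claim amounts to $\Tr(\bm{\Pi}_1\mC)\le\Tr(\bm{\Pi}_2\mC)$. In the rank-revealing regime $\mSigma\mG=\sigma^2\mI$ this is immediate, since then $\mC=\sigma^2\mI$ and $\Tr(\bm{\Pi}_k\mC)=\sigma^2\,\rank(\mH_k)$, i.e.\ Corollary~\ref{thm: E[f(x)] --> rank(H)}. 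For general $\mG,\mSigma$ I would try to establish $\Tr\!\big((\bm{\Pi}_2-\bm{\Pi}_1)\mC\big)\ge 0$. This follows whenever $\mathrm{range}(\mH_1)\subseteq\mathrm{range}(\mH_2)$ — for instance when $\mH_2\succeq\mH_1$ in the Loewner order, or the two Hessians are supported on nested subspaces — because then $\mathrm{range}(\mK_1)\subseteq\mathrm{range}(\mK_2)$, so $\bm{\Pi}_2-\bm{\Pi}_1$ is itself an orthogonal projector, hence PSD, and the trace of a product of two PSD matrices is nonnegative.

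The step I expect to be the main obstacle is this last implication without extra structure: ordering the ranks does not, by itself, order the traces against an arbitrary fixed PSD matrix $\mC$, since a rank-one projector aligned with the leading eigenvector of $\mC$ can outweigh a higher-rank projector supported on small eigenvalues. Accordingly, I would expect the proof either to stay within the matched regime $\mSigma\mG\propto\mI$, or to invoke a structural relation between $\mH_1$ and $\mH_2$ (nested ranges, a shared eigenbasis, or a Loewner comparison) that forces $\bm{\Pi}_2-\bm{\Pi}_1\succeq 0$; under any such hypothesis the argument above closes, and I would check the supplementary material to identify which one is assumed.
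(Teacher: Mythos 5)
You did not manage to close the general case, and you were right not to: the obstacle you flagged at the end is precisely where the paper's own proof breaks down. Your symmetric reformulation $\lim_{t\to\infty}\mathbb{E}[f(\vtheta_t)]=\frac14\Tr(\bm{\Pi}\,\mC)$ with $\mC=\mG^{\frac12}\mSigma\mG^{\frac12}$ and $\bm{\Pi}$ the orthogonal projector onto $\mathrm{range}(\mG^{\frac12}\mH\mG^{\frac12})$ is correct and equivalent to Proposition~\ref{thm: E[f(x)] expression for large t} (the paper works with the oblique conjugate $\mP\mJ\mP^{-1}=\mG^{\frac12}\bm{\Pi}\mG^{-\frac12}$). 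The paper then writes the difference of the two limits as $\frac14\Tr\left(\mSigma\mG\left(\mP_1\mJ_1\mP_1^{-1}-\mP_2\mJ_2\mP_2^{-1}\right)\right)$ and asserts that $\rank(\mH_1)\le\rank(\mH_2)$ implies $\mP_1\mJ_1\mP_1^{-1}\succeq\mP_2\mJ_2\mP_2^{-1}$. That step is unjustified: the $\mP_i\mJ_i\mP_i^{-1}$ are non-symmetric projectors, and even after symmetrizing, two orthogonal projectors are Loewner-comparable only when their ranges are nested — a rank inequality alone gives nothing (and the asserted direction would require the \emph{lower}-rank projector to dominate the higher-rank one, which forces equality). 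Your counterexample mechanism refutes the proposition itself for general $(\mG,\mSigma)$: take $\mG=\mI$, $\mSigma=\mathrm{diag}(10,\varepsilon)$, $\mH_1=\ve_1\ve_1^\top$, $\mH_2=\ve_2\ve_2^\top$; the ranks are equal, yet the limits are $10/4$ versus $\varepsilon/4$, violating the claimed inequality. So the statement holds only under exactly the additional structure you listed — $\mSigma\mG=\sigma^2\mI$, where the limit equals $\frac{\sigma^2}{4}\rank(\mH)$ as in Corollary~\ref{thm: E[f(x)] --> rank(H)}, or nested ranges / a Loewner comparison of $\mH_1,\mH_2$ forcing $\bm{\Pi}_2-\bm{\Pi}_1\succeq 0$ — and under any such hypothesis your argument closes; without one, no argument can, and the paper's does not.
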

According to this proposition, a larger Hessian rank indicates a higher asymptotic expected loss. In the context of NNs, Proposition \ref{prop: rank H and loss} could imply that, typically, \emph{when a NN reaches a low loss during training, the loss has a low-rank Hessian}. In turn, for a low-rank Hessian, the noise directed at the null space of the Hessian does not affect the loss.  

The expected loss is related not only to the Hessian rank but also to its trace. Following Theorem \ref{thm: E[f(x)] expression}, the derivative of the expected loss in (\ref{eq: E[f(x)] for minima}) with respect to the time $t$ is
\begin{equation}
    \frac{\partial}{\partial t} \mathbb{E}[f(\vtheta_t)] = 
    \frac{1}{2}\Tr\left( 
        \mSigma \mG\mG\mH e^{-2\mG\mH t}
    \right).
\end{equation}
By setting $\mG=\mSigma^{-\frac{1}{2}}$ and $t=0$, we obtain
\begin{equation}
\label{eq: d/dt of Ef(x) = trace H}
    \frac{\partial}{\partial t} \mathbb{E}[f(\vtheta_t)]\Big|_{t=0} = 
     \frac{1}{2}\Tr\left(         \mH 
    \right).
\end{equation}
Thus, for the particular preconditioning matrix $\mG=\mSigma^{-\frac{1}{2}}$, the derivative of the expected loss at time $t=0$ is proportional to the Hessian trace. See more details in the SM.

\subsection*{Specific Preconditioners and Their Effect on the Expected Loss}
The expected loss for $t\rightarrow\infty$ depends only on the interplay between the preconditioner and the noise covariance according to Corollary \ref{prop: E[f(X)] --> tr(Sigma G)}\footnote{The results in this subsection are for $t\rightarrow\infty$.}. We fix the magnitude of the preconditioner and examine the direction leading to the maximal expected loss. The expected loss is indicative of the ability to escape local minima. In \cite{zhu2019anisotropic}, it is used to define the escaping efficiency as $\mu_t\equiv \mathbb{E}_{\vtheta_t}[f(\vtheta_t)-f(\vtheta_0)]$, where $\vtheta_0$ is the value of $\vtheta_t$ for $t=0$. We note that higher expected loss means better escaping efficiency.
\begin{corollary}[\textbf{Maximal expected loss}]\label{prop: G for maximal expected loss }
    The preconditioner $\mG$ leading to the maximal expected loss is proportional to the noise covariance matrix. Formally,
    \begin{equation}
        \mG^*
        = \argmax_{\mG \; \text{s.t.} \; \|\mG\|_F=1} \mathbb{E}[f(\vtheta_t)]
        = \frac{\mSigma}{\|\mSigma\|_F}.
    \end{equation}
\end{corollary}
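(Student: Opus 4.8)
The plan is to reduce the problem to maximizing a linear functional over the Frobenius unit sphere and then invoke the Cauchy–Schwarz inequality. Since the footnote restricts attention to $t\to\infty$ and the relevant regime is a minimum with PD Hessian, I would start from Corollary~\ref{prop: E[f(X)] --> tr(Sigma G)}, which gives $\lim_{t\rightarrow\infty}\mathbb{E}[f(\vtheta_t)] = \tfrac14\Tr(\mSigma\mG)$. Hence maximizing the asymptotic expected loss over $\{\mG\in\mS_{++}^n : \|\mG\|_F = 1\}$ is equivalent to maximizing the linear map $\mG\mapsto\Tr(\mSigma\mG)$ over that set, and the factor $\tfrac14$ and the (fixed) scale $\|\mG\|_F=1$ play no role in locating the argmax.

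Next I would use that $\mSigma$ is symmetric to write $\Tr(\mSigma\mG) = \Tr(\mSigma^{T}\mG) = \langle\mSigma,\mG\rangle$, the Frobenius inner product. By Cauchy–Schwarz, $\langle\mSigma,\mG\rangle \le \|\mSigma\|_F\,\|\mG\|_F = \|\mSigma\|_F$ for every feasible $\mG$, with equality if and only if $\mG = c\,\mSigma$ for some $c\ge 0$; the normalization $\|\mG\|_F = 1$ then forces $c = 1/\|\mSigma\|_F$, i.e.\ $\mG^* = \mSigma/\|\mSigma\|_F$. It remains to check feasibility: when $\mSigma$ is PD (a non-degenerate noise covariance), $\mSigma/\|\mSigma\|_F$ is again PD, so $\mG^*\in\mS_{++}^n$ attains the upper bound and is therefore the unique maximizer. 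Equivalently, one can argue via a Lagrange-multiplier stationarity condition: $\nabla_{\mG}\big(\Tr(\mSigma\mG) - \mu(\|\mG\|_F^2-1)\big) = \mSigma - 2\mu\mG = 0$ yields $\mG \propto \mSigma$, and comparing the two admissible signs of the proportionality constant shows the positive one is the maximizer (the negative one being the minimizer).

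The argument is essentially one line of Cauchy–Schwarz, so there is no substantive obstacle; the only point requiring a little care is that $\mS_{++}^n$ is open, so a linear functional need not a priori attain its maximum on it. This is handled by noting that the maximizer of the relaxed problem over the closed set $\{\mG \succeq 0 : \|\mG\|_F = 1\}$ is $\mSigma/\|\mSigma\|_F$, which is PD exactly when $\mSigma$ is, so the relaxation is tight. If $\mSigma$ is merely PSD, the statement should be read with $\mG^*$ interpreted as a limit of feasible preconditioners, the supremum $\tfrac14\|\mSigma\|_F$ being approached but not attained inside $\mS_{++}^n$.
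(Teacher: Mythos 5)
Your proof is correct and follows essentially the same route as the paper, which justifies the corollary in one line by observing that the trace is the Frobenius inner product, so Cauchy--Schwarz gives the maximizer $\mG \propto \mSigma$. Your additional attention to feasibility (the openness of $\mS_{++}^n$ and the PSD relaxation) is a sound refinement of a point the paper leaves implicit, but it does not change the argument.
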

Corollary \ref{prop: G for maximal expected loss } holds since the trace is an inner product, and hence, the maximum of $\Tr\left(\mSigma \mG\right)$ is achieved for $\mG\propto \mSigma$. In other words, we see that a preconditioner that is aligned with the noise covariance results in the highest expected loss.

For PD matrices $\mH$ and $\mG$ we have $ \mathbb{E}[f(\vtheta_t)] = \Tr(\mSigma\mG) = \Tr(\mG^{\frac{1}{2}}\mSigma\mG^{\frac{1}{2}}) > 0$. This means that
\begin{equation}
        \mathbb{E}[f(\vtheta_t)] > f(\vtheta_0),
\end{equation}
and after initialization, the expected loss is greater than its initial value.

Next, we focus on two choices of preconditioning matrices: $\mG=\mI$ and $\mG=\mSigma^{-\frac{1}{2}}$ and examine the consequent expected loss. For this purpose, consider two stochastic processes $\vtheta_t$ and $\vpsi_t$ that follow the SDE in (\ref{eq: OU process for dx_t}) with preconditioning matrices $\mG=\mI$ and $\mG=\mSigma^{-\frac{1}{2}}$, respectively. We note that $\vtheta_t$ can be viewed as the continuous counterpart of the SGD algorithm \cite{robbins1951stochastic} since the gradient comprises two terms, the accurate gradient $\nabla f$, and a noise term. The process $\vpsi_t$ is similar in spirit to Adam \cite{kingma2014adam} since it also considers the square root of the second-order statistics, but instead of the correlation matrix, it makes use of the covariance of the noise. Setting $\mG=\mI$ and $\mG=\mSigma^{-\frac{1}{2}}$ in (\ref{eq: E[f(x)] for minima for large t}) results in $\mathbb{E}[f(\vtheta_t)] = \Tr(\mSigma)$ and $\mathbb{E}[f(\vpsi_t)] = \Tr(\mSigma^{\frac{1}{2}})$, respectively. To determine which preconditioner leads to a larger loss, we compare between $\Tr(\mSigma)$ and $\Tr(\mSigma^{\frac{1}{2}})$. Generally, when the noise is high,  $\Tr(\mSigma) > \Tr(\mSigma^{\frac{1}{2}})$ and $\mG=\mI$ leads to a higher loss.

We further investigate the two preconditioners by fixing their magnitude and examining the effect of their direction on the loss. 
\begin{proposition}\label{prop: Ef(x) > Ef(y) two preconditioning} 
    Suppose the preconditioners of $\vtheta_t$ and $\vpsi_t$ have the same Frobenius norm. If
    \begin{equation}
         \Tr(\mSigma)>n, 
    \end{equation}
    then 
    \begin{equation}
        \mathbb{E}[f(\vtheta_t)] > \mathbb{E}[f(\vpsi_t)].
    \end{equation}
\end{proposition}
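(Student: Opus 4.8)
The plan is to reduce the statement, via Corollary~\ref{prop: E[f(X)] --> tr(Sigma G)}, to an elementary inequality between traces of powers of $\mSigma$, and then to a one-variable AM--GM estimate on the eigenvalues of $\mSigma$. First I would note that we are in the PD-Hessian regime, and that both candidate preconditioners ($\propto\mI$ and $\propto\mSigma^{-\frac12}$) are PD, so Corollary~\ref{prop: E[f(X)] --> tr(Sigma G)} applies and gives $\lim_{t\to\infty}\mathbb{E}[f(\vtheta_t)] = \tfrac14\Tr(\mSigma\mG_\theta)$ and $\lim_{t\to\infty}\mathbb{E}[f(\vpsi_t)] = \tfrac14\Tr(\mSigma\mG_\psi)$ with $\mG_\theta\propto\mI$ and $\mG_\psi\propto\mSigma^{-\frac12}$. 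The equal-Frobenius-norm hypothesis pins down the two scalings relative to one another; normalizing (without loss of generality) so that $\|\mG_\theta\|_F=\|\mG_\psi\|_F=\sqrt n$, i.e.\ $\mG_\theta=\mI$, recovers exactly the formulas stated just before the proposition, namely $\mathbb{E}[f(\vtheta_t)]=\tfrac14\Tr(\mSigma)$ and $\mathbb{E}[f(\vpsi_t)]=\tfrac14\Tr(\mSigma^{\frac12})$. Hence it suffices to prove $\Tr(\mSigma)>\Tr(\mSigma^{\frac12})$.

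Next I would diagonalize $\mSigma$ and denote its (positive) eigenvalues $s_1,\dots,s_n$, so that $\Tr(\mSigma)=\sum_i s_i$, $\Tr(\mSigma^{\frac12})=\sum_i\sqrt{s_i}$, and the hypothesis $\Tr(\mSigma)>n$ reads $\sum_i s_i>n$. The core estimate is the pointwise AM--GM bound $\sqrt{s_i}=\sqrt{s_i\cdot 1}\le\tfrac12(s_i+1)$; summing over $i$ yields $\Tr(\mSigma^{\frac12})\le\tfrac12\bigl(\Tr(\mSigma)+n\bigr)$. Combining this with $\Tr(\mSigma)>n$ gives $\Tr(\mSigma^{\frac12})\le\tfrac12(\Tr(\mSigma)+n)<\tfrac12(\Tr(\mSigma)+\Tr(\mSigma))=\Tr(\mSigma)$, which is the claim, and dividing by $4$ gives the statement in terms of the expected losses.

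The only delicate point — and the step I expect to need the most care — is the bookkeeping around ``same Frobenius norm'': one must verify that the normalization used to pass from $\mG_\theta,\mG_\psi$ to the clean trace formulas is the intended one and does not change the direction of the inequality (the common scalar factor cancels in the comparison, so it does not). It is also worth recording that equality in AM--GM forces $s_i=1$ for every $i$, i.e.\ $\mSigma=\mI$ and $\Tr(\mSigma)=n$, so that under the strict hypothesis $\Tr(\mSigma)>n$ the resulting inequality is genuinely strict; everything else in the argument is routine.
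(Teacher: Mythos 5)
Your reduction via Corollary~\ref{prop: E[f(X)] --> tr(Sigma G)} is the right starting point, and your AM--GM estimate $\sqrt{s_i}\le\tfrac12(s_i+1)$ is a clean proof of $\Tr(\mSigma)>\Tr(\mSigma^{\frac12})$ from $\Tr(\mSigma)>n$. The problem is that $\Tr(\mSigma)>\Tr(\mSigma^{\frac12})$ is not the inequality the proposition asserts, and the step where you lose it is precisely the one you flagged as delicate: the equal-Frobenius-norm bookkeeping. The matrices $\mI$ and $\mSigma^{-\frac12}$ do \emph{not} have the same Frobenius norm in general ($\|\mI\|_F^2=n$ while $\|\mSigma^{-\frac12}\|_F^2=\Tr(\mSigma^{-1})$), so you cannot simultaneously take $\mG_\theta=\mI$ and $\mG_\psi=\mSigma^{-\frac12}$. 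Enforcing equal norms forces \emph{different} scalars on the two preconditioners, and these scalars do not cancel because each multiplies a different side of the comparison. With the normalization used in the paper's proof, $\mG_\theta=\sqrt{\Tr(\mSigma^{-1})/n}\,\mI$ and $\mG_\psi=\mSigma^{-\frac12}$, the inequality to establish is
\begin{equation*}
\sqrt{\tfrac{1}{n}\Tr(\mSigma^{-1})}\;\Tr(\mSigma)\;>\;\Tr\bigl(\mSigma^{\frac12}\bigr),
\end{equation*}
which is genuinely different from your target: for $\mSigma=4\mI$ (so $\Tr(\mSigma)=4n>n$) your inequality reads $4n>2n$, whereas the true comparison is $2n$ versus $2n$ --- the two normalized preconditioners coincide and the losses are equal. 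So your argument, as written, compares the \emph{unnormalized} preconditioners and therefore proves a different statement.

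For reference, the paper keeps $\mG_\psi=\mSigma^{-\frac12}$ exact, interprets $\sqrt{\Tr(\mSigma^{-1})/n}$ as the quadratic mean of the numbers $1/\sqrt{\lambda_i}$, lower-bounds it by their arithmetic mean $\Tr(\mSigma^{-\frac12})/n$ via the QM--AM inequality, and then invokes $\Tr(\mSigma)/n>1$; note that this chain bounds the left-hand side from below by $\Tr(\mSigma^{-\frac12})$, and the final comparison to $\Tr(\mSigma\mG_\psi)=\Tr(\mSigma^{\frac12})$ is exactly the point that requires care (as the $\mSigma\propto\mI$ example shows, at most a non-strict inequality can hold there). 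To repair your proof, redo the normalization so that both preconditioners genuinely share a Frobenius norm and target the scaled inequality displayed above rather than $\Tr(\mSigma)>\Tr(\mSigma^{\frac12})$.
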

We see that the preconditioner leading to a higher expected loss depends only on the power of the noise. 

Thus far, we analyzed the expected loss for a minimum point. In Section \ref{sec:Analysis for a Saddle Point} we consider the use of a preconditioner for a process that is initialized at a saddle point.

\subsection*{Saddle Points}\label{sec:Analysis for a Saddle Point}
For completeness, we analyze saddle points using our approach, while focusing on the ability of the multivariate OU process from (\ref{eq: OU process for dx_t}) to escape a saddle. To this end, here we assume that the initial point $\vtheta_0$ is a saddle point.
\begin{definition}[\textbf{Escape time}]\label{def: escape time}
    The escape time $t_{\text{esc}}$ from a saddle point, $\vtheta_0$, is defined  as the first time for which $\mathbb{E}[f(\vtheta_t)] < f(\vtheta_0)$, when the process is initialized at $\vtheta_0$.
\end{definition}

Under this definition, we have the following result.
\begin{proposition}[\textbf{Escaping a saddle point}]\label{prop: escape time from saddle point}
    The escape time from a saddle point, $t_{\text{esc}}$, is upper bounded by
    \begin{equation}\label{eq: t_esc expression}
        t_{\text{esc}} \leq
        \frac{\log\left(\frac{\Tr(\mSigma\mG)}{\lambda_{\min}\{\mSigma\mG\}}\right)}{|2\lambda_{\min}\{\mG\mH\}|}.
    \end{equation}
\end{proposition}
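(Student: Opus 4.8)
The plan is to exhibit an explicit time $T$ --- namely the right-hand side of~(\ref{eq: t_esc expression}) --- after which the expected loss has already dropped strictly below its value at the saddle, $f(\vtheta_0)=0$; since by Definition~\ref{def: escape time} the escape time $t_{\text{esc}}$ is the \emph{first} time this happens, the bound $t_{\text{esc}}\le T$ follows at once. The first move is to pass, exactly as elsewhere in the paper, to the symmetric matrices $\mA := \mG^{\frac{1}{2}}\mH\mG^{\frac{1}{2}}$ and $\mB := \mG^{\frac{1}{2}}\mSigma\mG^{\frac{1}{2}}$ obtained from $\mG\mH$ and $\mSigma\mG$ by the $\mG^{\frac{1}{2}}$-similarity; here $\mB \succ 0$, $\Tr(\mB) = \Tr(\mSigma\mG)$, $\lambda_{\min}\{\mB\} = \lambda_{\min}\{\mSigma\mG\}$, and $\lambda_{\min}\{\mA\} = \lambda_{\min}\{\mG\mH\}$. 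In these coordinates the expected-loss formula of Theorem~\ref{thm: E[f(x)] expression} becomes $\mathbb{E}[f(\vtheta_t)] = \frac{1}{4}\big(\Tr(\mB) - \Tr(\mB\,e^{-2\mA t})\big)$, so escaping the saddle at time $t$ is equivalent to $\Tr(\mB\,e^{-2\mA t}) > \Tr(\mB)$, and the whole proof reduces to lower-bounding $\Tr(\mB\,e^{-2\mA t})$ by a quantity that grows exponentially in $t$ with rate governed by the most negative eigenvalue of $\mA$.

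Next I would diagonalize $\mA$. Writing $\mA = \sum_i \lambda_i\,\vu_i\vu_i^T$ with an orthonormal eigenbasis $\{\vu_i\}$ and eigenvalues ordered $\lambda_1 \le \cdots \le \lambda_n$, and using that $\vtheta_0$ being a saddle forces $\mH$ --- hence $\mA$ --- to be indefinite so that $\lambda_1 = \lambda_{\min}\{\mG\mH\} < 0$, we get $\Tr(\mB\,e^{-2\mA t}) = \sum_i e^{-2\lambda_i t}\,\vu_i^T\mB\vu_i$. Since $\mB \succeq 0$, every summand is nonnegative; retaining only the $i=1$ term and using $\vu_1^T\mB\vu_1 \ge \lambda_{\min}\{\mB\}$ gives
\begin{equation*}
\Tr(\mB\,e^{-2\mA t}) \;\ge\; \lambda_{\min}\{\mSigma\mG\}\,e^{\,2\,|\lambda_{\min}\{\mG\mH\}|\,t},
\end{equation*}
which strictly exceeds $\Tr(\mSigma\mG) = \Tr(\mB)$ precisely when $e^{2\,|\lambda_{\min}\{\mG\mH\}|\,t} > \Tr(\mSigma\mG)/\lambda_{\min}\{\mSigma\mG\}$, i.e.\ when $t > T$, with $T$ the right-hand side of~(\ref{eq: t_esc expression}). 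For such $t$, $\mathbb{E}[f(\vtheta_t)] = \frac{1}{4}\big(\Tr(\mB) - \Tr(\mB\,e^{-2\mA t})\big) < 0 = f(\vtheta_0)$, and since this holds for all $t > T$, the escape time obeys $t_{\text{esc}} \le T$, as claimed.

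The only genuine obstacle is the symmetrization in the first step. The matrices $\mSigma\mG$ and $\mG\mH$ are not symmetric, so they cannot be diagonalized in an orthonormal basis, and --- worse --- in the eigenbasis of $\mG\mH$ the diagonal entries of $\mSigma\mG$ need not be positive, which would wreck the ``retain only the dominant term'' estimate. The $\mG^{\frac{1}{2}}$-similarity used throughout the paper (e.g.\ in the remark following Proposition~\ref{thm: E[f(x)] expression for large t}) resolves this, since it makes $\mG\mH$ symmetric while preserving positive-definiteness of the noise factor. Two minor caveats: at a saddle $\mG\mH$ may be singular, but this is harmless because the argument uses only its smallest eigenvalue, which exists and is negative; and the stated bound is finite precisely because $\mSigma \succ 0$, so that $\lambda_{\min}\{\mSigma\mG\} > 0$ --- this should be read as a standing assumption on the noise covariance.
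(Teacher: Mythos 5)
Your proof is correct and follows essentially the same route as the paper: both start from the expected-loss formula of Theorem~\ref{thm: E[f(x)] expression}, lower-bound the transient trace term by $\lambda_{\min}\{\mSigma\mG\}\,e^{-2\lambda_{\min}\{\mG\mH\}t}$, and solve for the time at which this overtakes $\Tr(\mSigma\mG)$. The only difference is in how that bound is obtained --- you symmetrize via $\mG^{\frac{1}{2}}$ and keep a single term of the orthonormal eigen-expansion of $\mB\,e^{-2\mA t}$, whereas the paper applies a trace inequality (Lemma~\ref{lemma: tr > lambda_min lmbda_max}) to the non-symmetric matrices directly; your variant is, if anything, the more airtight of the two, since working in the symmetric coordinates avoids any ambiguity about ordering and positivity in traces of products of merely similar-to-symmetric matrices.
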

We note that the matrices $\mG\mH$ and $\mSigma\mG$ are similar to symmetric matrices so they have real spectra. In addition, for $\mG=\mI$ and $\mSigma = \mI$, the upper bound coincides with the result presented in \citet{chen2020stationary}. According to Proposition \ref{prop: escape time from saddle point}, for $t>t_{\text{esc}}$, it is guaranteed that $\mathbb{E}[f(\vtheta_t)] < f(\vtheta_0)$. Additionally, we see that the interaction between the preconditioning matrix and the Hessian has a greater effect on the escape time than the interaction between the preconditioning matrix and the noise covariance. We remark that escaping saddle points considering adaptive gradient methods was previously explored, for example in \cite{staib2019escaping}.

\section*{Application to Hessian Rank Estimation}\label{sec: Hessian Rank Estimation}
Consider a general NN with a vector parameter $\vtheta$. Suppose that the NN is already trained and that at the end of training the parameters are in the vicinity of some minimum $\vtheta_0$ of the loss $ \mathcal{L} $. Given $\vtheta_0$, our goal is to estimate the rank of the loss' Hessian at~$\vtheta_0$. 

In light of Corollary \ref{thm: E[f(x)] --> rank(H)}, we propose the following estimator of the Hessian rank
\begin{equation}\label{eq: r hat estimated  Hessian rank}
    \hat{r} = 
    \frac{4}{\sigma^2}\left(\langle\mathcal{L}(\vtheta_t)\rangle_{\mathcal{K}} - \mathcal{L}(\vtheta_0)\right),
\end{equation}
where $\mathcal{L}(\vtheta_t)$ is the loss of the NN at time $t$ and $\langle \mathcal{L}(\vtheta_t) \rangle_\mathcal{K}$ is the average of the loss over $t \in \mathcal{K}$ for the set of indices $\mathcal{K}$ (taken to be the last $K_{\text{avg}}$ iterations). We subtract $\mathcal{L}(\vtheta_0)$ since in general $\mathcal{L}(\vtheta_0)\neq 0$. Broadly, estimating the Hessian rank according to (\ref{eq: r hat estimated  Hessian rank}) requires the computation of $\langle \mathcal{L}(\vtheta_t) \rangle_\mathcal{K}$. To compute $\langle\mathcal{L}(\vtheta_t)\rangle_\mathcal{K}$, we update the parameters of the NN, $\vtheta_t$, according to a discretization of the SDE in~(\ref{eq: SDE G gradient G noise}), which is given by the Euler–Maruyama method \cite{kloeden1992stochastic}:
\begin{equation}
\label{eq: discrete Langevin with G}
    \vtheta_{t+1} = \vtheta_t - \eta_{t+1}\mG\nabla \mathcal{L}(\vtheta_t) + \sqrt{\eta_{t+1}}\mG\mSigma^{\frac{1}{2}} \vn_{t+1},
\end{equation}
where $\eta_t>0$ is the stepsize, $\nabla\mathcal{L}(\vtheta_t)$ is the gradient of the loss, and the preconditioner $\mG$ and noise covariance $\mSigma$ are chosen according to Corollary $\ref{thm: E[f(x)] --> rank(H)}$. Next, we describe the rank estimation algorithm in detail.

\subsection*{Algorithm}\label{sec:Algorithm}
The first step of the algorithm is choosing a preconditioner. We empirically found that for deep NNs, using $\mG=\mI$ which corresponds to GD could be unstable (see SM). To circumvent instability, we suggest using the weights of an adaptive gradient method at the minimum point (which in many cases is a byproduct of the training). In our experiments, we use Adam preconditioner to demonstrate the applicability of a commonly used adaptive method. We note that other adaptive gradient methods could also be used. We continue by setting the noise covariance according to $\mSigma=\sigma^2\mG^{-1}$. Next, the weights of the NN are updated according to (\ref{eq: discrete Langevin with G}) with the preconditioner $\mG$ and the noise covariance $\mSigma$ for $K_{\text{tot}}$ iterations. The gradient of the loss $\mathcal{L}(\vtheta_t)$ is computed using full batch with standard backpropagation. Finally, the average loss during the last $K_{\text{avg}}$ iterations out of the total $K_{\text{tot}}$ iterations is computed, and the Hessian rank is estimated using the average loss according to (\ref{eq: r hat estimated  Hessian rank}). 

Setting the hyperparameters $K_{\text{tot}}$ and $K_{\text{avg}}$ can be done by observing the curve of the actual loss during the update of~$\vtheta_t$. The computation of the average loss should begin when the loss reaches a steady-state. This, empirically, does not require many iterations. For example, in the experimental results, we set $K_{\text{tot}} = 1.5\times 10^4$ and $K_{\text{avg}}=10^4$. The algorithm is summarized in Algorithm \ref{algo:Hessian rank estimaion}.

\begin{algorithm}[tb]
    \caption{Hessian rank estimation}
    \label{algo:Hessian rank estimaion}    
    \textbf{Input}: $\vtheta_0$ (NN weights at the end of the training) \\
    \textbf{Parameter}: $\sigma^2$, $K_{\text{tot}}$, $K_{\text{avg}}$ \\
    \textbf{Output}: $\hat{r}$ (rank estimate)
    \begin{algorithmic}[1]   
        \STATE Choose a fixed preconditioner $\mG$
        \STATE Set $\mSigma=\sigma^2\mG^{-1}$
        \STATE Update $\vtheta_t$ according to (\ref{eq: discrete Langevin with G}) with $\mG$ and $\mSigma$ for $K_{\text{tot}}$ iterations
        \STATE Compute $\langle\mathcal{L}(\vtheta_t)\rangle_{\mathcal{K}}$ over the last $K_{\text{avg}}$ iterations
        \STATE Compute $\hat{r}$ using (\ref{eq: r hat estimated  Hessian rank})
        \RETURN $\hat{r}$
    \end{algorithmic}
\end{algorithm}

In terms of complexity, the proposed rank estimation algorithm requires the computation of the full gradient at each iteration. 
In comparison, the matrix rank estimation of \citet{ubaru2016fast} requires a full Hessian-vector product at each iteration. This involves computing second-order derivatives or approximating the full Hessian-vector product by a numeric computation using backpropagation at two adjacent points. This results in twice the amount of computations at each iteration compared to the proposed approach.

Beyond the scope of NNs, we remark that the proposed approach could be used to estimate the rank of any symmetric matrix $\mS$ by computing the expected loss of the discretization of LD given in (\ref{eq: discrete Langevin with G}) and setting $\nabla \mathcal{L}(\vtheta_t) = \mS \vtheta_t$. However, this direction requires further investigation, exceeding the scope of this paper.

\subsection*{Linear Networks}\label{subsec: Linear Networks}
We start by considering linear NNs in a regression task since there exists an explicit expression for their Hessian at a global minimum. This allows us to evaluate our approach by comparing the Hessian rank estimation with the true rank.

The output of a linear NN is
\begin{equation}
    g_\vtheta(\vx) = \mW_M\mW_{M-1}\cdots\mW_1 \vx,
\end{equation}
where $\mW_i$ denotes the weights of the $i$th layer, and $M$ denotes the depth. Here the vector parameter $\vtheta$ is just a concatenation of the vectorizations of all the $\mW_i$ matrices. In this experiment, we use the quadratic loss, \emph{i.e.}
\begin{equation}
    \mathcal{L}(\vtheta) = \frac{1}{n}\sum_{j= 1}^{n}\left[ \| \vy_j -  g_\vtheta(\vx_j) \|^2 \right],
\end{equation}
where $ \{ ( \vx_j,\vy_j ) \}_{j= 1}^{n} $ is a paired training set. We denote by $d_x$ and $d_y$ the dimension of $\vx$ and $ \vy $, respectively. 

In this setting, there is no unique minimum, and the set of global minima is $\{\vtheta\in\sR^n : \mW_M\mW_{M-1}\cdots\mW_1=\mSigma_{xy}\mSigma_x^{-1}\}$, where $\mSigma_x$ is the covariance matrix of $\vx$ and $\mSigma_{xy}$ denotes the cross-covariance matrix between $\vx$ and $\vy$. The Hessian at a global minimum is given by \cite{mulayoff2020unique}
\begin{equation}\label{eq: h = phi phi}
    \mH = 2\mPhi\mPhi^\top,
\end{equation}
where $\mPhi = [\mPhi_1^\top, \mPhi_2^\top,\ldots,\mPhi_M^\top]^\top$ and 
\begin{equation}\label{eq: expression for phi}
    \mPhi_k = 
    \Bigg(\prod_{j=1}^{k-1}\mW_j\mSigma_x^{\frac{1}{2}} \Bigg) \otimes
    \Bigg(\prod_{j=k+1}^{M}\mW_j\mSigma_x^{\frac{1}{2}} \Bigg)^\top.
\end{equation}
Here  $\otimes$ denotes the Kronecker product. From (\ref{eq: h = phi phi}) and (\ref{eq: expression for phi}), \citet{mulayoff2020unique} showed that the Hessian rank equals the dimension of the input multiplied by the dimension of the output, \emph{i.e.}, $\rank(\mH) = d_xd_y$, regardless of the depth of the linear network. We leverage this known Hessian rank to evaluate the performance of our approach.
 
In all the experiments, we set the depth $M=5$, and w.l.o.g.\ consider $d_x = d_y = d$. The input $\vx$ is assumed to be random with a covariance matrix $\mSigma_x=\mI$. The cross-covariance matrix between $\vx$ and $\vy$, denoted by $\mSigma_{xy}$, is arbitrarily chosen at random.

Figure \ref{fig: Loss vs iterations linear Nets} presents the loss of a linear NN with input dimensions of $d=32$ and $Md^2=5120$ parameters. The weights are initialized at one of the global minima, randomly chosen. We set $\mG=\mI$,  $\eta = 10^{-4}$ and $\sigma_n^2 = 2\times 10^{-5}$. We remark that for nonlinear NNs, we found that using the Adam preconditioner is preferable to avoid stability issues. The actual loss is the blue curve and the theoretical values of the expected loss according to Theorem~\ref{thm: E[f(x)] expression} and Proposition~\ref{thm: E[f(x)] expression for large t} are the red and black curves, respectively. We see that the theoretical values are in accordance with the actual loss. We report that the expected loss over time according to Theorem \ref{thm: E[f(x)] expression} converges to the steady-state value in Proposition \ref{thm: E[f(x)] expression for large t} for a larger number of iterations.

Next, we evaluate the performance of the Hessian rank estimation for linear NNs with varying dimensions. We consider six different networks with input and output dimensions of $d=\{8, 16, 32, 64, 128,256\}$. The number of parameters is $Md^2$ and varies between $320$ parameters for $d=8$ to $327,680$ parameters for $d=256$. This allows us to evaluate the proposed approach for small-scale and large-scale linear NNs. For each network, we run $100$ different rank estimation trials, where at each trial the network is initialized at a randomly chosen global minimum. We follow Algorithm \ref{algo:Hessian rank estimaion} with $K_{\text{tot}} = 1.5\times 10^4$ and $K_{\text{avg}}=10^4$ iterations, noise power of $\sigma_2=2\times 10^{-5}$, and the stepsize is $\eta=10^{-4}$.  We compare the performance with the matrix rank estimation method proposed in \cite{ubaru2016fast} using their published implementation. For a fair comparison, we consider the same number of iterations for both methods, by setting the polynomial degree to $50$ and the number of vectors to $300$. For brevity, we term their method U\&S. We note that the presented results are not sensitive to different choices of hyperparameters. The U\&S method requires the Hessian-vector product. To avoid computing the entire Hessian, which for a network of $d=256$ has over $10^{11}$ entries, we exploit the known structure of the Hessian and the following property of the Kronecker product 
\begin{equation}
    \text{vec}(\mM_1\mM_2\mM_3) = (\mM_3^\top\otimes \mM_1)\text{vec}(\mM_2),
\end{equation}
for any matrices $\mM_1,\mM_2$, and $\mM_3$. This allows efficient computation of the Hessian-vector product. 

Figure \ref{fig: Est rank vs rank linear Nets} presents an error bar of the Hessian rank estimation obtained by the proposed approach in blue and the U\&S method in red. The whiskers represent the standard deviation (STD). The gray dashed line is the $y=x$ line representing the accurate rank. 
We see that the proposed approach leads to accurate rank estimation. In contrast, the U\&S method leads to underestimation. The reason is that U\&S is based on the estimation of the spectral density, and it requires a threshold that determines the rank. For the involved spectrum of a Hessian of NNs, setting this threshold is challenging. In contrast, the proposed approach does not require the estimation of the spectral density of the Hessian.
Additionally, since the rank estimate is proportional to the expected loss, the y-axis in Figure \ref{fig: Est rank vs rank linear Nets} is proportional to the expected loss, and there exists a linear relation between the Hessian rank and the expected loss as Corollary \ref{thm: E[f(x)] expression for large t} suggests. The small STD implies that the average loss of different trials is similar. Since each trial is initialized at a different minimum with a different Hessian, these results are in accordance with Corollary \ref{prop: E[f(X)] --> tr(Sigma G)} (even though the Hessian is PSD). 

Table \ref{Table: RMSE of Hessian rank estimation} presents the root mean square error (RMSE) normalized by the actual Hessian rank for the proposed approach and U\&S. We see that the proposed approach leads to a better accuracy by a large margin.   
\begin{table}[ht]
    \centering
    \setlength{\tabcolsep}{4.9pt}
    \begin{tabular}[t]{lcccccc}
        \toprule
        Rank& $8^2$ & $16^2$ & $32^2$ & $64^2$ & $128^2$ & $256^2$\\
        \midrule
        Ours  &6.9\%&2.8\%&1.2\%&0.3\%&2\%&4.6\%\\
        U\&S &12.2\%&18.5\%&19\%&19.2\%&17.9\%&18.7\%\\
        \bottomrule
    \end{tabular}
    \caption{Normalized RMSE of the Hessian rank estimation for different Hessian ranks.}
    \label{Table: RMSE of Hessian rank estimation}
\end{table}
We emphasize that the same hyperparameters are used for all the tested NNs, even though their number of parameters varies drastically. Potentially, improvement in performance could be achieved by adjusting the hyperparameter according to the NN dimension. 

\begin{figure}
    \centering
    \includegraphics[width=0.75\columnwidth]{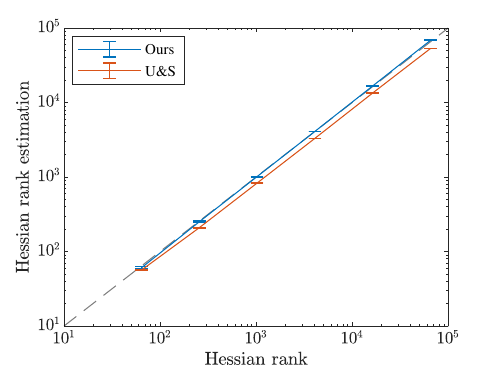} %
    \caption{Estimated rank of linear networks with different dimensions. Our method is in blue and the U\&S method proposed in \citet{ubaru2016fast} is in red. The error bars represent one standard deviation. We set the same number of iterations for both methods.}
    \label{fig: Est rank vs rank linear Nets}
\end{figure}

\subsection*{Denoising NN}
In this section, we demonstrate Algorithm~\ref{algo:Hessian rank estimaion} in a real-world application of nonlinear deep neural networks. We train a DnCNN \cite{zhang2017beyond} for denoising on the MNIST dataset. The network contains a few parameters ($753$) so that the exact computation of the Hessian and its spectrum for evaluation is feasible. In this experiment, we add a white Gaussian noise $ \mathcal{N}(\boldsymbol{0}, \boldsymbol{I}) $ to the training images and use the MSE loss. To avoid overfitting, in each epoch, we draw new noise realizations. We trained the network using SGD, and finalized the training using GD with a small stepsize to ensure reaching the minimum (for evaluation purposes). 

To estimate the Hessian rank at this minimum, we use Algorithm~\ref{algo:Hessian rank estimaion} with Adam preconditioner. Specifically, after convergence, we further train the model with Adam to get its second-moment estimation. Then, we fix the preconditioner $\mG $ of Adam, and run Algorithm~\ref{algo:Hessian rank estimaion} for $K_{\text{tot}} = 30\times 10^3$ iterations with stepsize $\eta=0.1$, and $\sigma^2 = 10^{-5}$. The last $K_{\text{avg}} = 10^4$ iterations are used to compute the averaged loss.

In this setup, there is no analytic expression for the Hessian rank. Thus, to evaluate the performance of Algorithm~\ref{algo:Hessian rank estimaion}, we numerically computed the Hessian using automatic differentiation. Due to finite numerical precision, it is expected that the rank of this numerically computed Hessian be falsely larger than the rank of the true Hessian (which we do not have) since eigenvalues that should theoretically be zero are replaced by small nonzero values. Therefore, instead of comparing the estimated rank to this falsely larger rank, we measure how much of the spectrum energy of the numerical Hessian is within the estimated rank. Specifically, we use the cumulative sum of eigenvalues and normalize it by the Hessian trace, namely, $\bar{\lambda}_j = \frac{1}{\Tr(\boldsymbol{H})}\sum_{i=1}^j\lambda_i\{\mH\}$.

Figure \ref{fig: Hessian eig and rank Non linear Net} presents the spectrum of the Hessian $\{\bar{\lambda}_j\}_{j=1}^{753}$ in blue, and the proposed Hessian rank estimation as a vertical black line. The dashed black line presents the Hessian rank estimation of the U\&S method. We see that our method captures more than $98.5\%$ of the energy of the eigenvalues with a rank estimate of $133$. In contrast, the U\&S method results in the rank estimation of $34$ capturing only $88\%$ of the energy. We note that in this experiment, using Adam preconditioner is required for stability. In the SM, we show that using the identity matrix as a preconditioner is unstable.

\begin{figure}
    \centering
    \includegraphics[width=0.75\columnwidth]{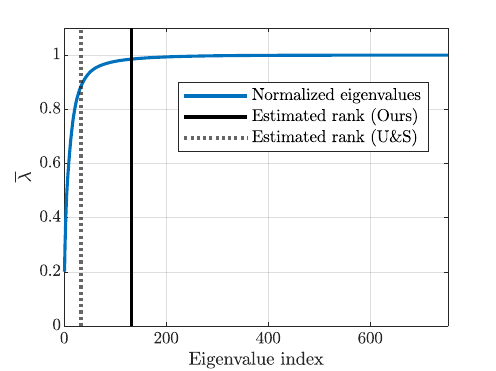}%
    \caption{The normalized cumulative sum of the eigenvalues of the Hessian for DnCNN (blue). The solid black line and the dashed gray line represent the estimated rank of the proposed approach and the U\&S method, respectively.}
    \label{fig: Hessian eig and rank Non linear Net}
\end{figure}

\section*{Related Work}
In \cite{zhu2019anisotropic}, a similar setting to ours, of a quadratic approximation of the loss function near minimum points, is considered. There, the OU process without a preconditioner is analyzed using Taylor expansion assuming a short time scale. In contrast to \cite{zhu2019anisotropic}, we consider a preconditioner in the OU process, and our analysis is for a large time scale. This allows us to examine the effect of the preconditioner as well as the Hessian on the expected loss.

The spectral properties of the Hessian of NNs were investigated in previous works. In \cite{ghorbani2019investigation}, an estimation of the spectral density of the Hessian is proposed, and insights regarding the geometry of the loss surface are derived. In \cite{yao2020pyhessian}, a framework for estimating the trace of the Hessian, its top eigenvalues, and its spectral density is also proposed, and the effect of the NN architecture on the Hessian spectrum is investigated. 
Nonetheless, to the best of our knowledge, a method for estimating the Hessian rank has not been presented.

Methods for estimating the rank of a matrix beyond the context of NNs have also been proposed \cite{perry2010minimax, kritchman2009non, ubaru2016fast}. However, they are application-specific or not well-suited for the Hessian of a NN. In \cite{ubaru2016fast}, it was proposed to estimate the Hessian rank by estimating the trace of the eigen projector, which, in turn, is estimated using a polynomial filter of degree $m$, $\phi_m(x)$, that serves as an approximation of a step function. 

Following Theorem \ref{thm: E[f(x)] expression} and setting $\mSigma = \mG = \mI$, the proposed approach uses the expected loss to compute the value of $\Tr{(\mI - e^{-\mH t})}$ for a large $t$. This could be viewed as the filter of $\phi(x) = 1-e^{-xt}$. Consequently, the proposed approach could be viewed as a means of using LD to implement an exponential filter. In contrast to the filters proposed by \citet{ubaru2016fast}, which are only polynomial approximations, the proposed approach implements the exact expression of the filter under the ergodicity assumption. Additionally, the U\&S method requires the estimation of the density of the spectrum as well as setting multiple hyperparameters, which affect its performance. For NNs, it requires the computation of the Hessian matrix or its approximation to produce the matrix-vector products. In contrast, the proposed approach relies only on the computation of the expected loss and is more suitable for estimating the rank of the Hessian of NNs. Furthermore, the proposed approach has a theoretical guarantee that in the limit of infinite iterations, it converges to the Hessian rank.

\section*{Conclusions}
In this work, we consider preconditioned Langevin dynamics near stationary points and analyze the expected loss for different preconditioners. We show that when a preconditioner admits a particular relation with respect to the noise covariance, the expected loss is proportional to the Hessian rank. Following this theoretical result, we devise an iterative algorithm for neural network Hessian rank estimation. In addition, we show that under a certain condition, the expected loss at a large time scale depends only on the interplay between the preconditioner and the noise covariance. We use our analysis to compare SGD-like and Adam-like preconditioners, deriving a condition on which preconditioner leads to a higher loss. We empirically demonstrate the theoretical results and the accurate Hessian rank estimation for linear neural networks as well as DnCNN networks.

\section*{Acknowledgments}
The work of AB and RT was supported by the European Union’s Horizon 2020 research and innovation programme under grant agreement No. 802735-ERC-DIFFO. The work of TM was partially supported by the Israel Science Foundation (grant no. 2318/22), and by the Ollendorff Minerva Center, ECE faculty, Technion. The work of RM was supported by the Planning and Budgeting Committee of the Israeli Council for Higher Education.

\bibliography{Refs}

\onecolumn
\appendix

\section{Proofs}

\subsection{Proof of Theorem \ref{thm: E[f(x)] expression}}
For convenience, we repeat the SDE in (\ref{eq: OU process for dx_t})
\begin{equation}
         d\vtheta_t = -\mG\mH \vtheta_tdt +  \mG \mSigma^{\frac{1}{2}} d\vn_t.
\end{equation}
So, $\vtheta_t$ follows the multivariate OU process explicit formula
\begin{equation}
    \vtheta_t = \int_0^t e^{-\mG\mH(t-s)} \mG \mSigma^{\frac{1}{2}} d\vn_s,
\end{equation}
with the following correlation matrix
\begin{align}\label{eq: expression for E[x_tx_t^T]}
    \mathbb{E}[\vtheta_t\vtheta_t^\top ] 
    & = \mathbb{E}\left[ \int_0^t e^{-\mG\mH(t-s)}\mG \mSigma^{\frac{1}{2}} d\vn_s
    \left( \int_0^t e^{-\mG\mH(t-s)}\mG \mSigma^{\frac{1}{2}} d\vn_s \right) ^\top \right] \nonumber  \\
    & = \mathbb{E}\left[ \int_0^t e^{-\mG\mH(t-s)}\mG \mSigma^{\frac{1}{2}} (\mSigma^{\frac{1}{2}})^\top \mG^\top   e^{-\mH\mG(t-s)} ds \right] \nonumber \\
    & = \mathbb{E}\left[ \int_0^t e^{-\mG\mH(t-s)} \mG \mSigma \mG   e^{-\mH\mG(t-s)}
    ds \right],
\end{align}
where Ito's isometry is used. This is a well-known result of the multivariate OU process \cite{gardiner1985handbook}.

We examine the expected loss.
\begin{equation}
        \mathbb{E}[f(\vtheta_t)] =      
        \mathbb{E}\left[\frac{1}{2}\vtheta_t^\top \mH\vtheta_t\right] = 
        \frac{1}{2}\mathbb{E}\left[\Tr\left(\mH\vtheta_t\vtheta_t^\top \right)\right] = 
        \frac{1}{2}\Tr\left(\mH\mathbb{E}\left[\vtheta_t\vtheta_t^\top \right]\right) ,
\end{equation}
which follows from the linearity of the trace and the expectation. 
We use (\ref{eq: expression for E[x_tx_t^T]}) and notice that the integral is $ds$, and $\mH$ depends on $s$, to obtain
\begin{equation}
    \begin{split}
        \mathbb{E}[f(\vtheta_t)] =  
        \frac{1}{2}
        \int_0^t \Tr\left( \mH e^{-\mG\mH(t-s)}\mG \mSigma \mG  e^{-\mH\mG(t-s)}
        ds \right).
    \end{split}
\end{equation}
According to Lemma~\ref{lemma: matrix similarity}, the matrix inside the trace, $ \mH e^{-\mG\mH(t-s)}\mG \mSigma \mG  e^{-\mH\mG(t-s)} $, is similar to $ \mH e^{-2\mG\mH(t-s)} \mG \mSigma\mG $, so they share their spectra and have the same trace. We therefore get
\begin{equation}
    \mathbb{E}[f(\vtheta_t)]
    = \mathbb{E}\left[\mathrm{\frac{1}{2}}\vtheta_t^\top \mH \vtheta_t\right] 
    = \mathrm{\frac{1}{2}}\int_0^t\Tr\left(e^{-2\mG\mH(t-s)}\mG  \mSigma\mG  \mH \right)ds
    = \frac{1}{4}\Tr\left(\mSigma \mG\left(\mI-e^{-2\mG\mH t}\right)\right).
\end{equation}

\subsection{Proof of Proposition \ref{thm: E[f(x)] expression for large t}}
First, we note that the matrix product $\mG\mH$ is similar to $\mG^{\frac{1}{2}}\mH\mG^{\frac{1}{2}}$, so it has a real spectrum and it can be expressed by 
\begin{equation}
    \mG\mH = \mP\mLambda\mP^{-1},
\end{equation}
where $\mLambda$ is a diagonal matrix containing the eigenvalues of $\mG\mH$ and $\mP$ is a matrix whose columns are the eigenvectors of $\mG\mH$. According to Lemma~\ref{lemma: GHG is SPD}, the matrix $\mG^{\frac{1}{2}}\mH\mG^{\frac{1}{2}}$ is PSD so the eigenvalues of $\mG\mH$ are greater or equal to zero as well, namely, $\mLambda_{ii}\ge 0$. 
We have that $e^{-2\mG\mH t} = \mP e^{-2\mLambda t} \mP^{-1}$ and 
\begin{equation}
    \lim_{t\rightarrow\infty}e^{-2\mLambda_{ii} t} = 
    \begin{cases}
         1 & \mLambda_{ii} = 0\\
         0 & \mLambda_{ii} \neq 0.
    \end{cases}
\end{equation}    
By Theorem \ref{thm: E[f(x)] expression} and since $\mI = \mP\mP^{-1}$ we have
\begin{equation}
    \lim_{t\rightarrow\infty}\mathbb{E}[f(\vtheta_t)]
    = \lim_{t\rightarrow\infty}\frac{1}{4}\Tr\left(\mSigma \mG\mP\left(\mI-e^{-2\mLambda t}\right)\mP^{-1}\right)
    = \lim_{t\rightarrow\infty}\frac{1}{4}\Tr\left(\mSigma \mG\mP\mJ\mP^{-1}\right) .
\end{equation}

\subsection{Proof of Proposition \ref{prop: rank H and loss}}
By Proposition~\ref{thm: E[f(x)] expression for large t} we have
\begin{align}
    \lim_{t\rightarrow\infty}\mathbb{E}[f_2(\vtheta_t)] - \lim_{t\rightarrow\infty}\mathbb{E}[f_1(\vtheta_t)] 
    & = \lim_{t\rightarrow\infty}\frac{1}{4}\Tr\left(\mSigma \mG \left( e^{-2\mG\mH_1 t} - e^{-2\mG\mH_2 t}\right)\right) \nonumber \\
    & = \frac{1}{4}\Tr\left(\mSigma \mG \left( \mP_1\mJ_1\mP_1^{-1} - \mP_2\mJ_2\mP_2^{-1}\right)\right),
\end{align}
where $\mG\mH_i =  \mP_i\mJ_i\mP_i^{-1}$ for $i=1,2$.

Using Sylvester's inequality \cite{petersen2008matrix} with the PD matrix $\mG$, we have that $\text{rank}(\mG\mH_i) = \text{rank}(\mH_i)$ for $i=1,2$. Since $\rank(\mH_1) \le \rank(\mH_2)$ it holds that
\begin{equation}
    \mP_1\mJ_1\mP_1^{-1} \succeq
   \mP_2\mJ_2\mP_2^{-1}.
\end{equation}
So, $\mM = \mP_1\mJ_1\mP_1^{-1} - \mP_2\mJ_2\mP_2^{-1}$ is a PSD matrix. Using Lemma \ref{Lemma: trace inequalities lemma} twice with the matrix product $\mSigma\mG\mM$ completes the proof.

\subsection{Proof of Proposition~\ref{prop: Ef(x) > Ef(y) two preconditioning}}
We denote the preconditioners of $\vtheta_t$ and $\vpsi_t$ by $\mG_1$ and $\mG_2$, respectively. We consider $\mG_1=\sqrt{\frac{\Tr(\mSigma^{-1})}{n}}\mI$ and $\mG_2=\mSigma^{-\frac{1}{2}}$ so that $||\mG_1||_F^2 = ||\mG_2||_F^2 = \Tr(\mSigma^{-1})$. According to Corollary \ref{thm: E[f(x)] expression for large t} the following holds 
\begin{equation}\label{eq: E[f(x)] for large t for sgd with normalization}
    \lim_{t\rightarrow\infty}\mathbb{E}[f(\vtheta_t)]
    = \Tr(\mG_1\mSigma)
    = \Tr\left(\sqrt{\frac{\Tr(\mSigma^{-1})}{n}}\mI \mSigma \right)
    = \sqrt{\frac{\Tr(\mSigma^{-1})}{n}}\Tr(\mSigma).     
\end{equation}

We continue by using the inequality of quadratic mean and arithmetic mean (part of the HM-GM-AM-QM inequalities).
\begin{align}
    \lim_{t\rightarrow\infty}\mathbb{E}[f(\vtheta_t)]
    & = \sqrt{\frac{\Tr(\mSigma^{-1})}{n}}\Tr(\mSigma) \nonumber \\ 
    & = \Tr(\mSigma) \sqrt{\frac{\frac{1}{\lambda_1}+\frac{1}{\lambda_2}+...+\frac{1}{\lambda_d}}{n}} \nonumber \\
    & \geq \Tr(\mSigma) \frac{\sqrt{\frac{1}{\lambda_1}}+\sqrt{\frac{1}{\lambda_2}}+...+\sqrt{\frac{1}{\lambda_d}}}{n} \nonumber \\
    & = \frac{\Tr(\mSigma)}{n}\cdot\Tr(\mSigma^{-\frac{1}{2}}) \nonumber \\
    & \ge \Tr(\mSigma^{-\frac{1}{2}}) \nonumber \\
    & = \lim_{t\rightarrow\infty}\mathbb{E}[f(\vpsi_t)],
\end{align}
where we also use $\frac{\Tr(\mSigma)}{n}>1$.

\subsection{Proof of Proposition~\ref{prop: escape time from saddle point}}
We use Theorem~\ref{thm: E[f(x)] expression} and  Lemma~\ref{lemma: tr > lambda_min lmbda_max} to obtain
\begin{align}\label{eq: E[F(x)] inequalities bounds}
    \mathbb{E}[f(\vtheta_t)]
    & = \frac{1}{4}\Tr\left(\mSigma \mG\right) - \frac{1}{4}\Tr\left(\mSigma \mG e^{-2\mG\mH t}\right) \nonumber \\ 
    & \leq \frac{1}{4}\Tr(\mSigma \mG) - \frac{1}{4}\lambda_{\min}\{\mSigma \mG\}\lambda_{\max}\{e^{-2\mG\mH t}\} \nonumber \\
    & = \frac{1}{4}\Tr(\mSigma \mG) - \frac{1}{4}\lambda_{\min}\{\mSigma \mG\}e^{-2\lambda_{\min}\{\mG\mH \}t}.
\end{align}

The escape time occurs when the expectation is negative, and this happens for
\begin{equation}
    t_{\text{esc}} 
    > \frac{\log\left(\frac{\Tr(\mSigma \mG)}{\lambda_{\min}\{\mSigma \mG\}}\right)}{-2\lambda_{\min}\{\mG\mH\}}.
\end{equation}
We note that $\lambda_{\min}\{\mG\mH\}<0$ due to Lemma \ref{lemma: GH negative eigenvalue} so the r.h.s of the inequality is positive. Furthermore, the matrices $\mSigma\mG$ and $\mG\mH$ are similar to symmetric matrices, so they have a real spectrum.

\section{Auxiliary Lemmas}
\begin{lemma}\label{lemma: matrix similarity}
    The matrix $ \mM_1 = \mH e^{-\mG\mH(t-s)}\mG\mSigma \mG e^{-\mH\mG(t-s)}$ is similar to $\mM_2 = \mH e^{-2\mG\mH(t-s)} \mG \mSigma\mG $.
\end{lemma}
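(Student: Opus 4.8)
The plan is to produce an explicit conjugating matrix rather than fight the broken symmetry directly. Write $\tau := t-s$ throughout; I claim that conjugation by the (always invertible) matrix $e^{-\tau\mH\mG}$ carries $\mM_1$ to $\mM_2$. The single ingredient needed is an intertwining identity between the (generally non-symmetric) products $\mG\mH$ and $\mH\mG$: since $(\mH\mG)\mH=\mH(\mG\mH)$, an immediate induction gives $(\mH\mG)^k\mH=\mH(\mG\mH)^k$ for all $k\ge 0$, and summing the (absolutely convergent) exponential series yields
\[
  e^{\alpha\mH\mG}\,\mH \;=\; \mH\,e^{\alpha\mG\mH}\qquad\text{for every }\alpha\in\sR .
\]

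With this in hand the computation is short. Using $\mM_1=\mH\,e^{-\tau\mG\mH}\,\mG\mSigma\mG\,e^{-\tau\mH\mG}$,
\[
  e^{-\tau\mH\mG}\,\mM_1\,e^{\tau\mH\mG}
  \;=\; e^{-\tau\mH\mG}\,\mH\,e^{-\tau\mG\mH}\,\mG\mSigma\mG\,e^{-\tau\mH\mG}\,e^{\tau\mH\mG},
\]
where the last two factors cancel. Pushing the remaining $e^{-\tau\mH\mG}$ through $\mH$ via the intertwining identity turns $e^{-\tau\mH\mG}\,\mH\,e^{-\tau\mG\mH}$ into $\mH\,e^{-2\tau\mG\mH}$, so the right-hand side collapses to $\mH\,e^{-2\tau\mG\mH}\,\mG\mSigma\mG=\mM_2$. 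Since $e^{-\tau\mH\mG}$ is invertible, $\mM_1$ and $\mM_2$ are similar. An equivalent two-step route — the one hinted at by the paper's remark that $\mG\mH$ is similar to the symmetric $\mG^{\frac12}\mH\mG^{\frac12}$ — is to first conjugate $\mM_1$ by $e^{\tau\mH\mG}$ to merge the two flanking exponentials into a single $e^{-2\tau\mH\mG}$, and then apply the cyclic similarity $XY\sim YX$ with the invertible factor $Y=e^{-2\tau\mH\mG}$ together with the same intertwining identity.

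I expect no real obstacle beyond bookkeeping: one must keep the exponentials on the correct side so that the cancellation and the push-through happen in the right order, and the only point deserving a sentence of justification is the passage from the polynomial identity $(\mH\mG)^k\mH=\mH(\mG\mH)^k$ to the exponential identity, which is immediate from absolute convergence of the series.
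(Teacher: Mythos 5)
Your proof is correct and is essentially identical to the paper's: both conjugate $\mM_1$ by $e^{\pm(t-s)\mH\mG}$ to cancel the rightmost exponential, and both use the intertwining identity $e^{\alpha\mH\mG}\mH=\mH e^{\alpha\mG\mH}$ (justified term-by-term from $(\mH\mG)^k\mH=\mH(\mG\mH)^k$) to merge the remaining exponentials into $e^{-2\mG\mH(t-s)}$. No gaps.
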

\begin{proof}
We start with the matrix $\mM_1$ and multiply it by $\mP = e^{\mH\mG(t-s)}$ from the right and by $\mP^{-1} = e^{-\mH\mG(t-s)}$ from the left ($\mP$ is square and invertible) to obtain
\begin{equation}\label{eq: matrix tmp}
    \mP \mM_1 \mP^{-1}
    = e^{-\mH\mG(t-s)} \mH e^{-\mG\mH(t-s)}\mG\mSigma \mG .
\end{equation}
We notice that
\begin{equation}\label{eq: exp*H H*exp}
    e^{-\mH\mG(t-s)}\mH
    = \left(\sum_k\frac{1}{k!}\left(-\mH\mG(t-s)\right)^k\right)\mH
    = \mH\left(\sum_k\frac{1}{k!}\left(-\mG\mH(t-s)\right)^k\right)
    = \mH e^{-\mG\mH(t-s)},
\end{equation}
since
\begin{equation}
    [(\mH\mG)^k]\mH = [\mH\mG\mH\mG\cdots\mH\mG]\mH = \mH[\mG\mH\mG\mH\cdots\mG\mH] = \mH[(\mG\mH)^k].
\end{equation}
We continue from (\ref{eq: matrix tmp}) using (\ref{eq: exp*H H*exp}) to get
\begin{align}
    \mP \mM_1 \mP^{-1}
    & = \mH e^{-\mG\mH(t-s)}  e^{-\mG\mH(t-s)}\mG\mSigma \mG \nonumber  \\
    & = \mH e^{-2\mG\mH(t-s)} \mG\mSigma \mG \nonumber \\
    & = \mM_2.
\end{align}
Since $\mM_1 = \mP^{-1}\mM_2\mP$ the two matrices are similar.    
\end{proof}

\begin{lemma}\label{lemma: GHG is SPD}
    If $\mH$ is a PSD matrix and $\mG$ is a full-rank symmetric matrix then
    $\mG\mH\mG$ is a PSD matrix.
\end{lemma}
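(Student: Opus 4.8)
The plan is to verify the two defining properties of a PSD matrix directly: symmetry and nonnegativity of the associated quadratic form. First I would check symmetry: since $\mH$ and $\mG$ are both symmetric, $(\mG\mH\mG)^\top = \mG^\top\mH^\top\mG^\top = \mG\mH\mG$, so $\mG\mH\mG \in \mS^n(\sR)$. Next, for an arbitrary $\vx\in\sR^n$, set $\vy = \mG\vx$ and compute
\begin{equation}
    \vx^\top(\mG\mH\mG)\vx = (\mG\vx)^\top\mH(\mG\vx) = \vy^\top\mH\vy \ge 0,
\end{equation}
where the inequality is exactly the PSD hypothesis on $\mH$. Hence $\mG\mH\mG \succeq 0$.

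Note that the full-rank assumption on $\mG$ is not actually needed for the PSD conclusion — it is only relevant if one additionally wants to preserve strict positive definiteness or the exact rank (via Sylvester's inequality), which is how the lemma gets used downstream. For the statement as written, the substitution argument above suffices.

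There is essentially no obstacle here: the only thing to be a little careful about is that the lemma is later invoked with $\mG$ replaced by $\mG^{\frac12}$ (the symmetric PD square root of the preconditioner), so in the write-up I would just make sure the hypotheses — $\mG$ symmetric and full-rank — are phrased so that $\mG^{\frac12}$ plainly satisfies them.
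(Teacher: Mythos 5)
Your proof is correct and follows essentially the same route as the paper's: substitute $\vy=\mG\vx$ into the quadratic form and invoke the PSD hypothesis on $\mH$. Your added observations — that symmetry of $\mG\mH\mG$ should be checked explicitly and that the full-rank assumption is not actually needed for the PSD conclusion (the paper's proof invokes $\mG\vu\neq\mathbf{0}$, which is likewise unnecessary here) — are both accurate refinements rather than deviations.
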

\begin{proof}
    For every vector $\vu\neq \mathbf{0}$ we have
    \begin{equation}
          \vu^\top(\mG\mH\mG)\vu = (\mG\vu)^\top\mH(\mG\vu) \geq 0,
    \end{equation}
    since $\mH$ is PSD and $\mG\vu\neq \mathbf{0}$.
\end{proof}

\begin{lemma}\label{lemma: tr > lambda_min lmbda_max}
    If $\mH$ is a symmetric matrix and $\mSigma$ and $\mG$ are PD matrices, 
    then the following holds
    \begin{equation}
      \Tr(\mSigma \mG e^{-2\mG\mH t} ) > \lambda_{\min}\{\mSigma \mG\} \lambda_{\max}\{e^{-2\mG\mH t}\}
    \end{equation}
\end{lemma}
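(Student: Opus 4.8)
The plan is to diagonalize the exponential factor, expand the trace as a weighted sum of diagonal quadratic forms associated with $\mSigma\mG$, bound each of those below by $\lambda_{\min}\{\mSigma\mG\}$, and then upgrade the estimate to a \emph{strict} inequality using that the regime of interest has $n\ge 2$.

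First I would collect the structural facts already used in the paper. Since $\mSigma\mG$ is similar to the symmetric positive definite matrix $\mG^{\frac{1}{2}}\mSigma\mG^{\frac{1}{2}}$, its spectrum is real and positive and $\lambda_{\min}\{\mSigma\mG\}=\lambda_{\min}\{\mG^{\frac{1}{2}}\mSigma\mG^{\frac{1}{2}}\}>0$; likewise $\mG\mH$ is similar to the symmetric matrix $\mG^{\frac{1}{2}}\mH\mG^{\frac{1}{2}}$, so $e^{-2\mG\mH t}$ has a real positive spectrum and $\lambda_{\max}\{e^{-2\mG\mH t}\}=e^{-2\lambda_{\min}\{\mG\mH\}t}$. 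Writing $\mG\mH=\mP\mLambda\mP^{-1}$ with $\mLambda=\mathrm{diag}(\lambda_1,\dots,\lambda_n)$ the real eigenvalues, we get $e^{-2\mG\mH t}=\mP\,\mathrm{diag}(e^{-2\lambda_1 t},\dots,e^{-2\lambda_n t})\,\mP^{-1}$, and by cyclicity of the trace $\Tr(\mSigma\mG e^{-2\mG\mH t})=\sum_{i=1}^{n}\big[\mP^{-1}\mSigma\mG\mP\big]_{ii}\,e^{-2\lambda_i t}$.

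The core step is to show that each diagonal entry satisfies $\big[\mP^{-1}\mSigma\mG\mP\big]_{ii}\ge\lambda_{\min}\{\mSigma\mG\}>0$. Granting this, the index $i^{\star}$ with $\lambda_{i^{\star}}=\lambda_{\min}\{\mG\mH\}$ already contributes a term $\ge\lambda_{\min}\{\mSigma\mG\}\,\lambda_{\max}\{e^{-2\mG\mH t}\}$, and for $n\ge 2$ (automatic whenever $\mH$ is indefinite, as in the saddle-point application) the remaining $n-1$ terms are strictly positive, which yields the claimed strict inequality; summing all $n$ terms in fact gives the stronger bound $\Tr(\mSigma\mG e^{-2\mG\mH t})\ge\lambda_{\min}\{\mSigma\mG\}\,\Tr(e^{-2\mG\mH t})>\lambda_{\min}\{\mSigma\mG\}\,\lambda_{\max}\{e^{-2\mG\mH t}\}$, where the last step uses that a sum of $n\ge 2$ positive numbers exceeds the largest one.

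I expect the diagonal lower bound to be the real obstacle, because $\mSigma\mG$ is not symmetric and $\mP$ (an eigenbasis of the matrix $\mG\mH$, which is only \emph{similar} to a symmetric matrix) cannot in general be taken orthogonal, so $\big[\mP^{-1}\mSigma\mG\mP\big]_{ii}$ is a quadratic form of a non-symmetric matrix (equivalently of its symmetric part) and is not a priori $\ge\lambda_{\min}\{\mSigma\mG\}$. The remedy I would pursue is to carry out all the reductions through a single consistent similarity: take $\mP=\mG^{\frac{1}{2}}\mV$ with $\mV$ an orthonormal eigenbasis of the symmetric matrix $\mG^{\frac{1}{2}}\mH\mG^{\frac{1}{2}}$, substitute into $\mP^{-1}\mSigma\mG\mP$, and check whether the $\mG^{\pm\frac{1}{2}}$ factors flanking $\mSigma$ recombine so that the diagonal entries become honest Rayleigh quotients $\mv_i^{\top}\big(\mG^{\frac{1}{2}}\mSigma\mG^{\frac{1}{2}}\big)\mv_i\ge\lambda_{\min}\{\mG^{\frac{1}{2}}\mSigma\mG^{\frac{1}{2}}\}=\lambda_{\min}\{\mSigma\mG\}$. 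Making this compatibility work out — forcing the similarity that symmetrizes the exponential to simultaneously symmetrize $\mSigma\mG$ in the relevant directions — is where the substance of the lemma lies; once it is in place, the weighted-sum estimate together with the $n\ge 2$ strictness argument close the proof.
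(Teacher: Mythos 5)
You have put your finger on exactly the right difficulty, but your proposed remedy does not close it --- and, more importantly, it cannot be closed. With $\mP=\mG^{\frac{1}{2}}\mV$, where $\mV$ is an orthonormal eigenbasis of $\mG^{\frac{1}{2}}\mH\mG^{\frac{1}{2}}$, one gets $\mP^{-1}\mSigma\mG\mP=\mV^{\top}\mG^{-\frac{1}{2}}\mSigma\mG^{\frac{3}{2}}\mV$: the powers of $\mG$ flanking $\mSigma$ do \emph{not} recombine into $\mG^{\frac{1}{2}}\mSigma\mG^{\frac{1}{2}}$ unless $\mSigma$ and $\mG$ commute. The diagonal entries are $(\mG^{-\frac{1}{2}}\mv_i)^{\top}\mSigma(\mG^{\frac{3}{2}}\mv_i)$, quadratic forms of a non-normal matrix whose numerical range is not confined to the interval $[\lambda_{\min}\{\mSigma\mG\},\lambda_{\max}\{\mSigma\mG\}]$, so the per-entry lower bound your plan rests on is unavailable. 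You are in good company: the paper's own proof has the same soft spot, since it asserts $\mSigma\mG-\lambda_{\min}\{\mSigma\mG\}\mI\succeq\mathbf{0}$ and then invokes $\Tr(AB)\ge 0$, but both factors are only \emph{similar} to PSD matrices (via incompatible similarities, $\mG^{-\frac{1}{2}}(\cdot)\mG^{\frac{1}{2}}$ for one and $\mG^{\frac{1}{2}}(\cdot)\mG^{-\frac{1}{2}}$ for the other), and $\Tr(AB)\ge 0$ genuinely requires symmetric PSD factors.

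The gap is not repairable because the statement is false when $\mSigma$ and $\mG$ do not commute. Take $n=2$, $0<\eps<1$, $\mG=\mathrm{diag}(1,\eps^{2})$, and
\begin{equation}
  \mSigma=\begin{pmatrix}1 & -\tfrac{1}{2\eps}\\ -\tfrac{1}{2\eps} & \tfrac{1}{\eps^{2}}\end{pmatrix}\succ\mathbf{0},
  \qquad
  \mH=\mG^{-\frac{1}{2}}\Bigl(-\tfrac{1}{2t}\log\begin{pmatrix}1&\tfrac12\\ \tfrac12&1\end{pmatrix}\Bigr)\mG^{-\frac{1}{2}},
\end{equation}
so that $\mH$ is symmetric and $e^{-2\mG\mH t}=\mG^{\frac{1}{2}}\bigl(\begin{smallmatrix}1&1/2\\1/2&1\end{smallmatrix}\bigr)\mG^{-\frac{1}{2}}=\bigl(\begin{smallmatrix}1&1/(2\eps)\\ \eps/2&1\end{smallmatrix}\bigr)$. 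Then $\mSigma\mG=\bigl(\begin{smallmatrix}1&-\eps/2\\-1/(2\eps)&1\end{smallmatrix}\bigr)$, both matrices have spectrum $\{\tfrac32,\tfrac12\}$, hence $\lambda_{\min}\{\mSigma\mG\}\lambda_{\max}\{e^{-2\mG\mH t}\}=\tfrac34$, yet
\begin{equation}
  \Tr\bigl(\mSigma\mG e^{-2\mG\mH t}\bigr)=\Bigl(1-\tfrac{\eps^{2}}{4}\Bigr)+\Bigl(1-\tfrac{1}{4\eps^{2}}\Bigr)\;\longrightarrow\;-\infty
  \quad\text{as }\eps\to 0 .
\end{equation}
The lemma (and with it the stated generality of Proposition~\ref{prop: escape time from saddle point}) therefore needs the extra hypothesis that $\mSigma\mG$ is symmetric, i.e.\ that $\mSigma$ and $\mG$ commute --- which covers the cases actually used elsewhere in the paper ($\mG=\mI$, $\mSigma=\sigma^{2}\mG^{-1}$, or diagonal $\mSigma,\mG$). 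Under that hypothesis a single conjugation by $\mG^{\frac{1}{2}}$ rewrites the trace as $\Tr\bigl(\mSigma\mG\, e^{-2\mG^{\frac{1}{2}}\mH\mG^{\frac{1}{2}}t}\bigr)$, a product of two symmetric PD matrices; your Rayleigh-quotient expansion in the orthonormal eigenbasis of the exponential then works verbatim, and your $n\ge 2$ observation (which the paper omits) is indeed needed for strictness.
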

\begin{proof}
    Since the matrix $\mG\mH$ is similar to the matrix $\mG^{\frac{1}{2}}\mH\mG^{\frac{1}{2}}$, the matrix $e^{-2\mG\mH t}$ has only positive eigenvalues, i.e., $e^{-2\mG\mH t}\succ \mathbf{0}$. Similarly, the matrix $\mSigma\mG$ is similar to the PD matrix $\mG^{\frac{1}{2}}\mSigma\mG^{\frac{1}{2}}$, sharing its positive spectrum. Thus we have 
    \begin{equation}
        \mSigma\mG - \lambda_{\min}\{\mSigma\mG\}\mI 
        \succeq
        \mathbf{0},
    \end{equation}
    and
    \begin{equation}
        \Tr \left( (\mSigma\mG - \lambda_{\min}\{\mSigma\mG\}\mI ) e^{-2\mG\mH t} \right)
        = \Tr(\mSigma \mG e^{-2\mG\mH t} ) - \lambda_{\min}\{\mSigma\mG\}
        \Tr(e^{-2\mG\mH t}) \geq 0.
    \end{equation}
    So,
    \begin{equation}
        \Tr(\mSigma \mG e^{-2\mG\mH t} )
        \geq \lambda_{\min}\{\mSigma\mG\} \Tr(e^{-2\mG\mH t}) 
        > \lambda_{\min}\{\mSigma\mG\} \cdot \lambda_{\max}\{e^{-2\mG\mH t}\}.
    \end{equation}
\end{proof}

\begin{lemma}\label{lemma: GH negative eigenvalue}
    Let $\lambda_1<0$ be a negative eigenvalue of the matrix $\mH$ associated with the eigenvector $\vu_1$. If $\mG$ is PD, then the matrix product $\mG\mH$ has a negative eigenvalue.
\end{lemma}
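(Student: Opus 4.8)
The plan is to reduce the claim about the (generally non-symmetric) matrix $\mG\mH$ to a statement about a symmetric matrix via the similarity transformation already used repeatedly in the paper. Since $\mG\in\mS_{++}^n$, the matrix $\mG^{\frac{1}{2}}$ is well-defined, symmetric, and invertible, and
\begin{equation}
    \mG\mH = \mG^{\frac{1}{2}}\left(\mG^{\frac{1}{2}}\mH\mG^{\frac{1}{2}}\right)\mG^{-\frac{1}{2}},
\end{equation}
so $\mG\mH$ is similar to the symmetric matrix $\mA := \mG^{\frac{1}{2}}\mH\mG^{\frac{1}{2}}$ and has exactly the same (real) spectrum. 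Hence it suffices to show that $\mA$ has a negative eigenvalue.

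First I would exhibit a single direction along which the quadratic form associated with $\mA$ is negative. The natural choice is $\vv := \mG^{-\frac{1}{2}}\vu_1$, which is nonzero because $\mG^{-\frac{1}{2}}$ is invertible and $\vu_1\neq\mathbf{0}$. Then
\begin{equation}
    \vv^\top \mA \vv
    = \left(\mG^{-\frac{1}{2}}\vu_1\right)^\top \mG^{\frac{1}{2}}\mH\mG^{\frac{1}{2}}\left(\mG^{-\frac{1}{2}}\vu_1\right)
    = \vu_1^\top \mH \vu_1
    = \lambda_1 \|\vu_1\|^2 < 0,
\end{equation}
using $\mH\vu_1=\lambda_1\vu_1$ and $\lambda_1<0$. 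By the variational (Rayleigh quotient) characterization of eigenvalues of a symmetric matrix, $\lambda_{\min}\{\mA\}\le \vv^\top\mA\vv/\|\vv\|^2 < 0$, so $\mA$ — and therefore $\mG\mH$ — has a negative eigenvalue, completing the proof.

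An equivalent route, which I would mention as the cleaner high-level statement, is Sylvester's law of inertia: since $\mA = (\mG^{\frac{1}{2}})^\top \mH \,\mG^{\frac{1}{2}}$ with $\mG^{\frac{1}{2}}$ invertible, $\mA$ is congruent to $\mH$ and hence has the same number of negative eigenvalues as $\mH$; the hypothesis guarantees this number is at least one. There is essentially no obstacle here — the only thing to be careful about is invoking the existence and invertibility of $\mG^{\frac{1}{2}}$ (immediate from $\mG\in\mS_{++}^n$) and noting that similarity preserves the spectrum, so the negative eigenvalue of $\mA$ transfers to $\mG\mH$.
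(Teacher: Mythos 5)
Your proof is correct and follows essentially the same route as the paper: both pass to the symmetric matrix $\mG^{\frac{1}{2}}\mH\mG^{\frac{1}{2}}$ via similarity and evaluate its quadratic form at the test vector $\vv=\mG^{-\frac{1}{2}}\vu_1$ to exhibit a negative value, concluding via the shared spectrum. Your additional remark on Sylvester's law of inertia is a nice (and slightly cleaner) packaging of the same idea, but the core argument is identical to the paper's.
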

\begin{proof}
    First, we note that $\mG \mH$ is similar to a symmetric matrix $\mG^{\frac{1}{2}}\mH\mG^{\frac{1}{2}}$. We define $\vv=\mG^{-\frac{1}{2}}\vu$ and examine
    \begin{equation}
        \vv^T \mG^{\frac{1}{2}}\mH\mG^{\frac{1}{2}} \vv
        = \vu_1^T\mH \vu_1<0,    
    \end{equation}
    so the matrix the matrix $\mG^{\frac{1}{2}}\mH\mG^{\frac{1}{2}}$ has at least one negative eigenvalue. Since $\mG\mH$ is similar to $\mG^{\frac{1}{2}}\mH\mG^{\frac{1}{2}}$, they have the same spectrum and $\mG\mH$ also has a negative eigenvalue.
\end{proof}

\begin{lemma}\label{Lemma: trace inequalities lemma}
Let $\mP$ be a PD matrix, and $\mS$ be a symmetric matrix. Then, the following holds
\begin{align}
        \Tr(\mP\mS)
        & \leq \lambda_{\max}\{\mS\}\Tr(\mP)\\
        \Tr(\mP\mS)
        & \geq \lambda_{\min}\{\mS\}\Tr(\mP) \nonumber \\
        & \geq \lambda_{\min}\{\mS\}\lambda_{\max}\{\mP\} .
\end{align}
\end{lemma}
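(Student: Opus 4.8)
The plan is to reduce everything to the positive-semidefinite (Loewner) ordering together with the cyclicity of the trace. Since $\mP$ is PD it has a symmetric PD square root $\mP^{\frac{1}{2}}$, and by cyclicity $\Tr(\mP\mS)=\Tr(\mP^{\frac{1}{2}}\mS\mP^{\frac{1}{2}})$, where the matrix appearing in the latter trace is symmetric. This is the only ``preparation'' step and it is immediate.

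For the upper bound I would start from the elementary fact that every symmetric matrix obeys $\lambda_{\max}\{\mS\}\mI-\mS\succeq\mathbf{0}$. Conjugating by $\mP^{\frac{1}{2}}$ is a congruence and therefore preserves positive semidefiniteness, so $\mP^{\frac{1}{2}}(\lambda_{\max}\{\mS\}\mI-\mS)\mP^{\frac{1}{2}}\succeq\mathbf{0}$; taking its (nonnegative) trace and expanding gives $\lambda_{\max}\{\mS\}\Tr(\mP)-\Tr(\mP\mS)\ge 0$, which is the first claim. The lower bound $\Tr(\mP\mS)\ge\lambda_{\min}\{\mS\}\Tr(\mP)$ is the mirror image: apply the same argument to $\mS-\lambda_{\min}\{\mS\}\mI\succeq\mathbf{0}$. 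Equivalently, and perhaps more transparently, I could diagonalize $\mP=\mU\,\mathrm{diag}(p_1,\dots,p_n)\,\mU^\top$ with all $p_i>0$, write $\Tr(\mP\mS)=\sum_i p_i\,\vu_i^\top\mS\vu_i$, bound each Rayleigh quotient $\vu_i^\top\mS\vu_i\in[\lambda_{\min}\{\mS\},\lambda_{\max}\{\mS\}]$, and factor out $\sum_i p_i=\Tr(\mP)$; both inequalities then drop out simultaneously.

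For the final inequality I would use that all eigenvalues of the PD matrix $\mP$ are positive, hence $\Tr(\mP)$, being their sum, satisfies $\Tr(\mP)\ge\lambda_{\max}\{\mP\}>0$; multiplying by $\lambda_{\min}\{\mS\}$ yields $\lambda_{\min}\{\mS\}\Tr(\mP)\ge\lambda_{\min}\{\mS\}\lambda_{\max}\{\mP\}$. I do not anticipate a genuine obstacle here; the single point requiring care is that this last step needs $\lambda_{\min}\{\mS\}\ge 0$, which is precisely the regime in which the lemma is invoked (there $\mS$ plays the role of a product of the form $\mSigma\mG\mM$ with $\mM$ PSD, so its spectrum is nonnegative).
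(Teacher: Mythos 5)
Your proof is correct and follows essentially the same route as the paper's: both rest on $\lambda_{\max}\{\mS\}\mI-\mS\succeq\mathbf{0}$ and $\mS-\lambda_{\min}\{\mS\}\mI\succeq\mathbf{0}$ together with the nonnegativity of the trace of a product of a PD and a PSD matrix, which you justify explicitly via the congruence $\mP^{\frac{1}{2}}(\cdot)\,\mP^{\frac{1}{2}}$ and cyclicity of the trace, whereas the paper takes it for granted. One point in your favor: the paper's proof stops after establishing the two trace bounds and never addresses the final link $\lambda_{\min}\{\mS\}\Tr(\mP)\ge\lambda_{\min}\{\mS\}\lambda_{\max}\{\mP\}$; you correctly observe that it follows from $\Tr(\mP)\ge\lambda_{\max}\{\mP\}$ only when $\lambda_{\min}\{\mS\}\ge 0$ (otherwise multiplying by $\lambda_{\min}\{\mS\}$ reverses the inequality), so the lemma as stated implicitly assumes a nonnegative spectrum for $\mS$ --- a hypothesis that, as you note, is satisfied where the lemma is invoked.
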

\begin{proof}
    Since $\mS$ is a symmetric matrix it has a real spectrum and it holds that $\lambda_{\max}\{\mS\}\mI-\mS\succeq \mathbf{0}$. We continue by
    \begin{equation}
        \Tr{(\mP{(\lambda_{\max}\{\mS\}\mI-\mS)})} = 
        \lambda_{\max}\{\mS\}\Tr{(\mP)} - \Tr{(\mP\mS)} \geq 0.
    \end{equation}
    Similarly, since $\mS - \lambda_{\min}\{\mS\}\mI \succeq \mathbf{0}$, we have
    \begin{equation}
        \Tr{(\mP{(\mS - \lambda_{\min}\{\mS\}\mI)})} = 
        \Tr{(\mP\mS)} - \lambda_{\min}\{\mS\}\Tr{(\mP)}\geq 0.
    \end{equation}
\end{proof}

\section{Connection to Hessian Trace Estimation}
Recall that according to (\ref{eq: d/dt of Ef(x) = trace H}), it holds that $\frac{\partial}{\partial t} \mathbb{E}[f(\vtheta_t)]\Big|_{t=0} =  \frac{1}{2}\Tr(\mH )$. According to the weights update in (\ref{eq: discrete Langevin with G}) for $\vtheta_0 = \mathbf{0}$ we have
\begin{equation}
    \vtheta_{1} = \sqrt{\eta}\vn_{1},
\end{equation}
which means that $\vtheta_1\sim\mathcal{N}(\mathbf{0},\eta \mI)$. 

The discrete implementation of (\ref{eq: d/dt of Ef(x) = trace H}) becomes 
\begin{align}
    \frac{1}{2}\Tr(\mH)
    = \frac{\partial}{\partial t} \mathbb{E} [f(\vtheta_t)]\Big|_{t=0}
    & \approx \left<\frac{f(\vtheta_1)-f(\vtheta_0)}{\eta}\right> \nonumber \\ 
    & = \frac{1}{\eta}\left<\frac{1}{2}\vtheta_1^T\mH\vtheta_1 - \frac{1}{2}\vtheta_0^T\mH\vtheta_0\right> \nonumber \\
    & = \frac{1}{2}\left<\vn_1^T\mH\vn_1\right> ,    
\end{align}
where $\left<\cdot\right>$ is the empirical mean. We obtain the trace estimation method proposed in \citet{hutchinson1990stochastic}. We remark that this analysis offers a slightly different view, since instead of computing the quadratic term $\mathbb{E}[\vn^T\mH\vn]$, we can use the output of the network for the Hessian trace estimation.

\section{Additional Experimental Results}
In the denoising NN experiment, we use the DnCNN \cite{zhang2017beyond} architecture with $8$ channels and $3$ convolution layers. We use the ReLU activation and batch normalization and optimize the model using SGD. For setting the preconditioner $\mG$, we further train the model using Adam with zero momentum, \emph{i.e.} $\beta_1=0$, and $\beta_2 = 0.999$ for $10^3$ iterations. We note that this phase of training using Adam is only required for obtaining its second-order moment estimation to use as a preconditioner, and this setting is designed to showcase that our method can be applied to models trained by SGD with Adam preconditioner. However, in cases where Adam is used for the optimization, its second-order moment estimation at the final iteration can be used without further training. The stepsize for updating the weights according to LD in (\ref{eq: discrete Langevin with G}) is set to $\eta=0.1$.

Next, we demonstrate that using $\mG=\mI$ as a preconditioner could lead to instability. We repeat the experiment and update the weights according to (\ref{eq: discrete Langevin with G}) with $\mG=\mI$ and a smaller stepsize of $\eta=0.02$. Figure~\ref{fig: Loss using SGD} presents the loss. We see that after only a few iterations the loss reaches a very large value implying that the process has escaped the minimum.

\begin{figure}[h]
    \centering
    \includegraphics[width=0.4\columnwidth]{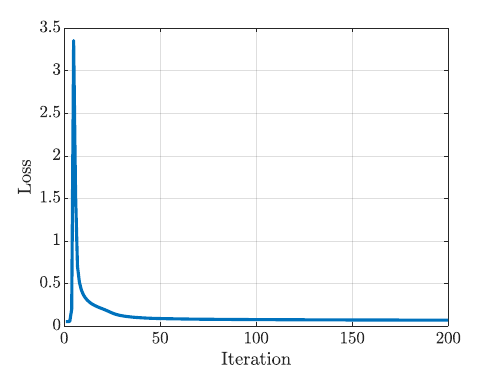}%
    \caption{The loss for DnCNN with $\mG=\mI$.}
    \label{fig: Loss using SGD}
\end{figure}

\end{document}